\title{Consistent Kernel Density Estimation with Non-Vanishing Bandwidth}
\author{Efr\'en~Cruz~Cort\'es\thanks{Contact first author at encc@umich.edu for further questions about this work.} }%
\author{Clayton~Scott}
\affil{Electrical Engineering and Computer Science\\ University of Michigan}
\renewcommand\footnotemark{}
\thanks{This work was supported in part by NSF grant 1422157.}
\date{}
\begin{document}

\maketitle

\begin{abstract}
Consistency of the kernel density estimator requires that the kernel bandwidth tends to zero as the sample size grows. In this paper we investigate the question of whether consistency is possible when the bandwidth is fixed, if we consider a more general class of weighted KDEs. To answer this question in the affirmative, we introduce the fixed-bandwidth KDE (fbKDE), obtained by solving a quadratic program, and prove that it consistently estimates any continuous square-integrable density. We also establish rates of convergence for the fbKDE with radial kernels and the box kernel under appropriate smoothness assumptions. Furthermore, in an experimental study we demonstrate that the fbKDE compares favorably to the standard KDE and the previously proposed variable bandwidth KDE.
\end{abstract}

\section{Introduction}\label{introduction}
Given an iid sample $X_1, \ldots, X_n \in \bbR^d$ drawn according to a probability density $f$, the kernel density estimator is
$$
f_{KDE} = \frac1{n} \sum_{i=1}^n k(\cdot, X_i),
$$
where $k$ is a kernel function with parameter $\sigma$. Examples of kernels are functions of the form $k(x,x') = \sigma^{-d}g(\sigma^{-2}\norm{x-x'}^2)$, where $\int{\sigma^{-d}g(\sigma^{-2}\norm{x-x'}^2)dx} = 1$ for all $x'$. A common kernel for density estimation is the Gaussian kernel $k(x,x') = (2\pi\sigma^2)^{-d/2}\exp({-(2\sigma^2)^{-1}\norm{x-x'}^2})$. Since its inception by Fix and Hodges~\cite{fix1951discriminatory} and development by Rosenblatt and Parzen \cite{rosenblatt1956remarks,parzen1962estimation}, the KDE has found numerous applications across a broad range of quantitative fields, and has also been the subject of extensive theoretical investigations, spawning several books (see, e.g, \cite{silverman86,scottdw92,wand94,devroye01}) and hundreds of research articles.

A strength of the KDE is that it makes few assumptions about $f$ and that it is consistent, meaning that it converges to $f$ as $n \to \infty$ \cite{wied2012consistency}. Analysis of the KDE stems from the following application of the triangle inequality in some norm of interest, where~$*$ denotes convolution: $\| f_{KDE} - f \| \le \| f_{KDE} - f * k\| + \| f * k- f \|$. Critical to the analysis of the KDE is the dependence of the {\em bandwidth} parameter $\sigma$ on $n$.
The first term tends to zero provided $n \sigma^d \to \infty$, i.e, the number of data points per unit volume tends to infinity. This is shown using properties of convolutions (since $f_{KDE}$ and $f * k$ may be viewed as convolutions of the kernel with the empirical and true distributions, respectively) and concentration of measure. For the latter term to tend to zero, $\sigma \to 0$ is necessary, so that the kernel behaves like a Dirac measure. With additional assumptions on the smoothness of $f$, the optimal growth of $\sigma$ as a function of $n$ can be determined.

The choice of bandwidth determines the performance of the KDE, and automatically selecting the optimal kernel bandwidth remains a difficult problem. Thus, researchers have developed numerous plug-in rules and cross-validation strategies, all of which are successful in some situations. A recent survey counts no fewer than 30 methods in the literature and cites 6 earlier review papers on the topic \cite{heidenreich2013bandwidth}.

As an alternative to the standard KDE, some authors have investigated weighted KDEs, which have the form
$
f_\alpha = \sum_{i=1}^n \alpha_i k(\cdot, X_i).
$
The weight vector $\alpha = (\alpha_1, \ldots, \alpha_n) \in \bbR^n$ is learned according to some criterion, and such weighted KDEs have been shown to yield improved performance over the standard KDE in certain situations \cite{kim1995thesis, girolami03,ganti2011cake,kim2012robust}. Consistency of such estimators has been investigated, but still under the assumption that $\sigma \to 0$ with $n$ \cite{kim2010l2, rigollet2007linear}.

In this work we consider the question of whether it is possible to learn the weights of a weighted KDE such that the resulting density estimator is consistent, for a broad class of $f$, where the bandwidth $\sigma$ remains fixed as $n \to \infty$. This question is of theoretical interest, given that all prior work establishing consistency of a KDE, to our knowledge, requires that the bandwidth shrinks to zero. The question is also of practical interest, since such a density estimator could potentially be less sensitive to the choice of bandwidth than the standard KDE, which, as mentioned above, is the main factor limiting the successful application of KDEs in practice.

In Section~\ref{fbkde} below, we introduce a weighted KDE that we refer to as the fixed-bandwidth KDE (fbKDE). Its connection to related work is given in Section~\ref{relatedwork}. The theoretical properties of this estimator, including consistency and rates of convergence with a fixed bandwidth, are established in Section~\ref{results}. Our analysis relies on reproducing kernel Hilbert spaces, a common structure in machine learning that has seldom been used to understand KDE's. In Section~\ref{experiments}, a simulation study is conducted to compare the fbKDE against the standard KDE and another weighted KDE from the literature. Our results indicate that the fbKDE is a promising alternative to these other methods of density estimation.

\section{Fixed bandwidth KDE}\label{fbkde}
We start by assuming access to iid data $X_1,\dots,X_n$ sampled from an unknown distribution with density $f$, having support $\sXbar = \supp \set{f}$ contained in the known set $\sX\subset\rd$, and with dominating measure $\mu$. The set $\sX$ is either compact, in which case $\mu$ is taken to be the Lebesgue measure, or $\sX = \rd$, in which case $\mu$ is a known finite measure. We study a weighted kernel density estimator of the form
\begin{align}\label{eq:falpha}
f_\alpha = \mysum \alpha_i k(\cdot,Z_i)
\end{align}
where $\alpha:=(\alpha_1,\dots,\alpha_n)^T\in A_n \subset \bbR^n$, $Z_i = X_i + \Gamma_i$, and $\Gamma_i$ is sampled iid from a known distribution with density $f_\Gamma$. Here, $f_\Gamma$ is chosen to ensure $Z_i\in\sX$, but not necessarily in $\sXbar$. Note that $f_\alpha$ is defined on $\sX$ and $k$ on $\sX\times\sX$. Throughout, $A_n$ is taken to be an $\ell_1$ ball in $\bbR^n$, that is
$
A_n = \set{\alpha\in\bbR^n \,\,\, | \,\,\, \onenorm{\alpha}\leq R_n},
$
for $R_n\in \bbR$. The reason for centering the kernels at $Z_i = Z_i + \Gamma_i$ instead of $X_i$ is that to accurately approximate $f$ with a fixed bandwidth, we might need centers outside the support of $f$.

To measure the error between $f_\alpha$ to $f$ we may consider the $\sL^2_\mu\paren{\sXbar}$ difference, where $\ltwomu(\sXbar)$ is the space of equivalence classes of square integrable functions, and where (using both $f$ for the function and its equivalence class) $\norm{f}_{\ltwomu(\sXbar)}^2:=\int_{\sXbar}f^2d\mu$. However, we do not know the set $\sXbar$ and cannot compute said difference. Hence, we consider the $\ltwomu\paren{\sX}$ difference from $f_\alpha$ to $f$.

To determine the scaling coefficients $\alpha$ we consider minimizing $\norm{f-f_\alpha}_{\ltwomu\paren{\sX}}^2$ over $\alpha\in A_n$. Since $\norm{f-f_\alpha}_{\ltwomu\paren{\sX}}^2 = \int_\sX {f_\alpha^2} -2\int_\sX ff_\alpha+\int_\sX{f^2}$, the term $\int_\sX f^2$ is independent of $\alpha$, and $f$ is zero outside $S$, we can focus on minimizing
$$
J(\alpha) = \int_{\sX}{f_\alpha(x)^2}d\mu(x) - 2\int_{\sXbar}{f_\alpha(x)f(x)d\mu(x)}.
$$
 Define $H(\alpha) := \int_{\sXbar} f_\alpha(x)f(x)d\mu(x) = \mysum \alpha_i \int_{\sXbar} k(x,Z_i)f(x)d\mu(x) =: \mysum \alpha_i h_i$. Since we don't know $f$, we don't know the true form of $H(\alpha)$ and $J(\alpha)$. However, the terms $h_i$ are expectations with respect to $f$ so we can estimate the term $H(\alpha)$ using the available data $\set{X_i}_{i=1}^n$. We use the leave-one-out estimator
$$
H_n(\alpha):=\mysum \alpha_i \frac{1}{n-1}\sum_{j\neq i} k(X_j,Z_i) =: \mysum \alpha_i \widehat{h}_i.
$$
With the aid of $H_n(\alpha)$, we define the function
\begin{align}
{J}_n(\alpha) &:= \int_{\sX} {f_\alpha(x)^2d\mu(x)} -2H_n(\alpha)
\end{align}
to which we have access. Let $J^*:=\inf_{\alpha\in A_n}J(\alpha)$ and
\begin{align}\label{eq:alphahat}
{\alphahat} := \argmin_{\alpha \in A_n}{{J}_n(\alpha)}.
\end{align}
The estimator
\begin{align}\label{eq:fbkde}
f_{\alphahat} = \mysum\alphahat_i k(\cdot,Z_i)
\end{align}
is referred to as the fixed bandwidth kernel density estimator (fbKDE), and is the subject of our study. In the following sections this estimator is shown to be consistent for a fixed kernel bandwidth $\sigma$ under certain conditions on $f,k,R_n,$ and $f_\Gamma$.

\section{Related work}\label{relatedwork}
The use of the $\sL^2$ norm as an objective function for kernel density estimation is not new, and in fact, choosing $\sigma$ to minimize $J_n$, with $\alpha_i = 1/n$, is the so-called least-squares leave-one-out cross-validation technique for bandwidth selection. Minimizing $J_n$ subject to the constraints $\alpha_i \ge 0$ and $\sum_i \alpha_i = 1$ was proposed by \cite{kim1995thesis}, and later rediscovered by \cite{girolami03}, who also proposed an efficient procedure for solving the resulting quadratic program, and compared the estimator to the KDE experimentally. This same estimator was later analyzed by \cite{kim2010l2}, who established an oracle inequality and consistency under the usual conditions for consistency of the KDE. 

Weighted KDEs have also been developed as a means to enhance the standard KDE in various ways. For example, a weighted KDE was proposed in \cite{ganti2011cake} as a form of multiple kernel learning, where, for every data point, multiple kernels of various bandwidths were assigned, and the weights optimized using $J_n$. A robust kernel density estimator was proposed in \cite{kim2012robust}, by viewing the standard KDE as a mean in a function space, and estimating the mean robustly. To improve the computational efficiency of evaluating a KDE, several authors have investigated sparse KDEs, learned by various criteria \cite{cruz2015sparse,girolami2003probability,phillips2013epsilon,drineas2005nystrom,chen2012super}.

The one-class SVM has been shown to converge to a truncated version of $f$ in the $\sL^2$ norm \cite{vert06}. If the truncation level (determined by the SVM regularization parameter) is high enough, and the density is bounded, then $f$ is consistently estimated. An ensemble of kernel density estimators is studied by \cite{rigollet2007linear}, who introduce aggregation procedures such that a weighted combination of standard KDEs of various bandwidths performs as well as the KDE whose bandwidth is chosen by an oracle.

In the above-cited work on weighted KDEs, whenever consistency is shown, it assumes a bandwidth tending to zero. Furthermore, the weights are constrained to be nonnegative. In contrast, we allow the weights on individual kernels to be negative, and this enables our theoretical analysis below. Finally, we remark that the terms ``fixed" or ``constant" bandwidth have been used previously in the literature to refer to a KDE where each data point is assigned the same bandwidth, as opposed to a ``variable bandwidth" KDE where each data point might receive a different bandwidth. We instead use ``fixed bandwidth" to mean the bandwidth remains fixed as the sample size grows.

\section{Theoretical Properties of the fbKDE}\label{results}

\paragraph{Notation}

The space $L_\nu^p(\sX)$ is the set of functions on $\sX$ for which the $p^{th}$ power of their absolute value is $\nu$-integrable over $\sX$. $\sL^p_\nu(\sX)$ is the set of equivalence classes of $L_\nu^p(\sX)$, where two functions $g_1$ and $g_2$ are equivalent if $\int_{\sX}\abs{g_1 - g_2}^pd\nu =0$. The symbol $\twonorm{\cdot}$ will denote both the Euclidean norm in $\rd$ as well as the $\sL^2_\nu$ norm; which is used will be clear from the context, as the elements of $\rd$ will be denoted by letters towards the end of the alphabet ($x,y$, and $z$). The set $C(\sX)$ denotes the space of continuous functions on $\sX$. Finally, the support of any function $g$ is denoted by $\supp\set{g}$.

We call a function $k:\sX\times\sX\to\bbR$ a {\em symmetric positive definite kernel} (SPD) \cite{scovel2010radial} if $k$ is symmetric and satisfies the property that for arbitrary $n$ and $\set{x_i}_{i=1}^{n}$ the matrix $[k(x_i,x_j)]_{i,j=1}^n$ is positive semidefinite. Every SPD kernel has an associated Hilbert space $\sH_k(\sX)$ of real-valued functions, called the reproducing kernel Hilbert space (RKHS) of $k$, which we will denote by $\sH$ when $k$ and $\sX$ are clear from the context, and for which $k(x,x') = \inpr{k(\cdot,x),k(\cdot,x')}_{\sH}$. SPD kernels also exhibit the {\em reproducing property}, which states that for any $h\in\sH$, $h(x) = \inpr{h,k(\cdot,x)}_{\sH}$. By a {\em radial SPD kernel} we mean an SPD kernel $k$ for which there is a strictly monotone function $g$ such that $k(x,x') = g(\twonorm{x-x'})$ for any $x,x'\in\sX$. Note that the Gaussian kernel is a radial SPD kernel.

If $k$ is a radial SPD kernel then $\sup_{x,x'\in\sX}{k(x,x')}\leq C_k$ for some $C_k>0$. This holds because
$k(x,x') = \inpr{k(\cdot,x),k(\cdot,x')} \leq \norm{k(\cdot,x)}\norm{k(\cdot,x')}$ by the reproducing property and Cauchy-Schwarz, and $\norm{k(\cdot,x)}\norm{k(\cdot,x')}= \sqrt{k(x,x)}\sqrt{k(x',x')} = g(0)$.

We will make use of the following assumptions:
\paragraph{A0}\label{assump:xset}
The set $\sX$ is either a compact subset of $\rd$, in which case $\mu$ is taken to be the Lebesgue measure, or $\sX = \rd$, in which case $\mu$ is a known finite measure.

This assumption is always held throughout the paper. It will not be explicitly stated in the statements of results below, but will be remembered in the proofs when needed.

\paragraph{A1}\label{assump:data}
$X_1,\dots,X_n$ are sampled independently and identically distributed (iid) according to $f$. $\Gamma_1,\dots,\Gamma_n$ are sampled iid according to $f_\Gamma$. Furthermore $X_1,\dots, X_n, \Gamma_i,\dots, \Gamma_n$ are independent.

Given $\set{(X_i,\Gamma_i)}_{i=1}^n$ as in~\nameref{assump:data}, we define $Z_i:=X_i+\Gamma_i$ for $1\leq i\leq n$. This notation will be kept throughout the paper.

\paragraph{A2} \label{assump:pdkernel}
The kernel $k$ is radial and SPD, with $k(x,x) = C_k$ for all $x\in\sX$. Furthermore, $k$ is Lipschitz, that is, there exists $L_k>0$ such that $\norm{k(\cdot,x) - k(\cdot,y)}_{2}\leq L_k\norm{x-y}_{2}$ holds for all $x$, $y\in\sX$.

Recall from eqn.~\eqref{eq:alphahat} that $\alphahat$ is the minimizer of ${J}_n$ over $A_n$. To show the consistency of $f_\alpha$ the overall approach of the following sections will be to show that $J(\alphahat)$ is close to $J^* = \inf_{\alpha\in A_n}\twonorm{f - f_\alpha}^2$ with high probability, and then show that $J(\alpha)$ (and therefore $J^*$) can be made arbitrarily small for optimal choice of $\alpha$. We start by stating an oracle inequality relating $J(\alphahat)$ and $J^*$.
\begin{lemma}
\label{lemma:oracle}
Let $\epsilon>0$. Let $\set{(X_i,\Gamma_i)}_{i=1}^n$ satisfy assumption~\nameref{assump:data}. Let $k$ satisfy $\sup_{x,x'\in\sX}{k(x,x')}\leq C_k$ and $f_\alpha$ be as in Equation~\eqref{eq:falpha}. Let $\delta = 2n\exp\paren{- \frac{(n-1)\epsilon^2}{8C_k^2R_n^2}}$. With probability $\geq 1-\delta$,
$$
\twonorm{f - f_{\alphahat}}^2 \leq \epsilon + \inf_{\alpha\in A_n}\twonorm{f - f_\alpha}^2.
$$
\hfill \qed
\end{lemma}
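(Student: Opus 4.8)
The plan is to use the standard oracle-inequality argument: show that $J_n$ is uniformly close to $J$ over $A_n$ with high probability, and then compare $J(\alphahat)$ to $J^*$ by a chaining of inequalities through $J_n(\alphahat)$ and $J_n(\alpha)$ for a near-optimal $\alpha$. The key observation is that $J$ and $J_n$ differ only in the linear term: $J(\alpha) - J_n(\alpha) = 2(H_n(\alpha) - H(\alpha)) = 2\mysum \alpha_i (\widehat h_i - h_i)$. Since $\onenorm{\alpha} \le R_n$ on $A_n$, Hölder's inequality gives $\abs{J(\alpha) - J_n(\alpha)} \le 2 R_n \max_i \abs{\widehat h_i - h_i}$, so it suffices to control $\max_i \abs{\widehat h_i - h_i}$.

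First I would fix $i$ and analyze $\widehat h_i - h_i$. Conditioning on $Z_i$ (equivalently on $X_i$ and $\Gamma_i$), the leave-one-out estimator $\widehat h_i = \frac{1}{n-1}\sum_{j \ne i} k(X_j, Z_i)$ is an average of $n-1$ independent terms $k(X_j, Z_i)$, each with conditional mean $\bbE[k(X_j,Z_i)\mid Z_i] = \int_{\sXbar} k(x,Z_i) f(x)\,d\mu(x) = h_i$ — this is exactly where the leave-one-out construction pays off, since $X_j$ for $j \ne i$ is independent of $Z_i$. Each term lies in $[0, C_k]$ by the boundedness hypothesis, so Hoeffding's inequality yields $\Pr\paren{\abs{\widehat h_i - h_i} \ge t \mid Z_i} \le 2\exp(-2(n-1)t^2/C_k^2)$, and since this bound is uniform in $Z_i$ it holds unconditionally. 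A union bound over $i = 1, \dots, n$ gives $\Pr\paren{\max_i \abs{\widehat h_i - h_i} \ge t} \le 2n\exp(-2(n-1)t^2/C_k^2)$. Setting $t = \epsilon/(4 R_n)$ makes the failure probability equal to $2n\exp\paren{-(n-1)\epsilon^2/(8 C_k^2 R_n^2)} = \delta$, and on the complementary event $\sup_{\alpha \in A_n}\abs{J(\alpha) - J_n(\alpha)} \le 2 R_n \cdot \epsilon/(4R_n) = \epsilon/2$.

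On that good event, let $\alpha^\star \in A_n$ be any point (or a point approaching the infimum) with $J(\alpha^\star) \le J^* + \epsilon'$ for arbitrarily small $\epsilon' > 0$. Then, using optimality of $\alphahat$ for $J_n$,
$$
J(\alphahat) \le J_n(\alphahat) + \tfrac{\epsilon}{2} \le J_n(\alpha^\star) + \tfrac{\epsilon}{2} \le J(\alpha^\star) + \epsilon \le J^* + \epsilon + \epsilon'.
$$
Letting $\epsilon' \to 0$ and recalling that $J(\alpha) + \int_\sX f^2\,d\mu = \twonorm{f - f_\alpha}^2$ (the term $\int_\sX f^2$ cancels from both sides), we obtain $\twonorm{f - f_{\alphahat}}^2 \le \epsilon + J^* + \int_\sX f^2 = \epsilon + \inf_{\alpha \in A_n}\twonorm{f - f_\alpha}^2$, as claimed. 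The only mildly delicate points are the measure-theoretic bookkeeping relating $J$ to the squared $\sL^2_\mu(\sX)$ norm (handled in the setup of Section~\ref{fbkde}) and, if the infimum $J^*$ is not attained, the $\epsilon' \to 0$ limiting step; neither is a real obstacle. The genuinely load-bearing step is recognizing that leave-one-out makes $\widehat h_i$ an unbiased, bounded-increment average \emph{conditionally on $Z_i$}, which is what licenses Hoeffding; the rest is Hölder plus a union bound.
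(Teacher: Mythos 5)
Your proposal is correct and follows essentially the same route as the paper's proof: conditional Hoeffding for each leave-one-out term $\widehat h_i$ (exploiting independence of $X_j$, $j\neq i$, from $Z_i$), a union bound over $i$, the $\ell_1$ constraint to pass from $\max_i|\widehat h_i - h_i|$ to uniform closeness of $J_n$ and $J$, and the standard chaining through $J_n(\alphahat)\leq J_n(\alpha)$ — with the constants working out to exactly the stated $\delta$.
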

The proof consists of showing the terms $H_n$ of $J_n$ concentrate around the terms $H$ of $J$, and is placed in Section~\ref{proofs}. This result allows us to focus on the term $J^* = \inf_{\alpha\in A_n}\twonorm{f - f_\alpha}^2$, which we proceed to do in the following sections.

\subsection{Consistency of $f_\alphahat$}\label{gaussianconsistency}
We state the consistency of $f_{\alphahat}$ for radial SPD kernels:
\begin{thm}\label{thm:consistency}
Let $\epsilon>0$. Let $\set{(X_i,\Gamma_i)}_{i=1}^n$ satisfy~\nameref{assump:data}, where $f\in\sF(\sX):= L_{\mu}^{2}(\sX)\cap C(\sX)$ and $\supp\set{f_Z}\supseteq\sX$. Let $k$ satisfy~\nameref{assump:pdkernel} and $f_{\alphahat}$ be as in Equation~\eqref{eq:fbkde}. If $A_n$ is such that $R_n\rightarrow \infty$ but $R_n^2 {\log{n}}/{n}\rightarrow 0$ as $n\rightarrow\infty$, then
$$
\prob{X,\Gamma}{ \twonorm{f - f_{\alphahat}}^2>\epsilon}\rightarrow 0
$$
as $n\rightarrow\infty$.
\hfill \qed
\end{thm}
The probability $\prob{X,\Gamma}{}$ is the joint probability of $\set{(X_i,\Gamma_i)}_{i=1}^n$. The sketch of the proof is as follows. To analyze the term $J^* = \inf_{\alpha\in A_n}\twonorm{f - f_\alpha}^2$ from Lemma~\ref{lemma:oracle}, we use the fact that $\sH$ is dense in $\sF$ \cite{scovel2010radial} in the sup-norm sense and that $\sH^0 : = \set{\sum^N_j c_j k(\cdot,y_j) | y_j \in \sX, c_i \in \bbR, \text{and}\,\,\,N\in\mathbb{N}}$ is dense in $\sH$ in the $\sH-$norm sense \cite{steinwart08}. That is, there exists a function $f_\sH\in\sH$ arbitrarily close to $f$ and a function $f_\beta\in\sH^0$ arbitrarily close to $f_\sH$. The function $f_\beta$ has the form
\begin{align}\label{eq:fbeta}
f_\beta: = \sum_{j=1}^m{\beta_j k(\cdot,y_j)},
\end{align}
where $\beta = (\beta_1,\dots,\beta_m)^T$. Note this is an abuse of notation since the functions $f_\alpha$ and $f_\beta$ do not have the same centers nor necessarily the same number of components. By the triangle inequality we have for any $\alpha$ in $A_n$:
\begin{align}\label{eq:triangledecomposition}
\twonorm{f - f_\alpha} \leq \twonorm{f - f_\sH} + \twonorm{f_\sH - f_\beta} + \twonorm{f_\beta - f_\alpha}.
\end{align}
By the above denseness results, the first two terms are small. To make the third term small we need two things: that $R_n$ is large enough so that there is an $\alpha\in A_n$ matching $\beta$, and that there exists centers $\set{Z_i}_{i=1}^n$ of $f_\alpha$ that are close to the centers $\set{y_j}_{j=1}^m$ of $f_\beta$, which will be true with a certain probability. In Section~\ref{proofs} we will prove all these approximations and show that the relevant probability is indeed large and approaches one.

\subsection{Convergence rates of $f_\alphahat$ for SPD radial kernels}\label{ratesgaussian}
The rates for radial SPD kernels may be slow, since these kernels are "universal" in that they can approximate arbitrary functions in $L^{2}_{\mu}(\sX)\cap C(\sX)$. To get better rates, we can make stronger assumptions on $f$. Thus, let $\sF_k =\set{f \,\,\, | \,\,\, f\geq 0 \,\, a.e., \int f(x)d\mu(x) = 1, \,\,\, f = \int_\sX k (\cdot,x)\lambda(x)dx, \,\,\, \lambda\in L^1(\sX)}$, that is, the space of densities expressible as $k$-smoothed $L_1$ functions. Then we obtain the following convergence rates.
\begin{thm}\label{theorem:gaussrates}
Let $\delta\in(0,1)$. Let $\sXbar = \sX$, $k$ satisfy~\nameref{assump:pdkernel}, $\set{(X_i,\Gamma_i)}_{i=1}^n$ satisfy~\nameref{assump:data} with $f\in\sF_k$, and $\min_{z\in\sX}\set{f_Z(z)}>0$. Let $f_\alphahat$ be as in Equation~\eqref{eq:fbkde}. If $d>4$ and $R_n\sim n^{1/2-d/2}$, then with probability $\geq 1-\delta$
\begin{align*}
\twonorm{f-f_\alphahat}^2 \lesssim \paren{\frac{1}{n}}^{2/d}\log^{1/2}\paren{n/\delta}.
\end{align*}
If $d\leq4$ and $R_n \sim n^{(1-C)/2}$ for C an arbitrary constant in $(0,1)$, then with probability $\geq 1-\delta$
\begin{align*}
\twonorm{f-f_\alphahat}^2 \lesssim \paren{\frac{1}{n}}^{C/2}\log^{2/d}\paren{n/\delta}.
\end{align*}
\hfill \qed
\end{thm}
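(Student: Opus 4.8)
The plan is to combine the oracle inequality of Lemma~\ref{lemma:oracle} (which applies since a radial SPD kernel satisfies $\sup_{x,x'}k(x,x')\le C_k$) with a quantitative bound on the approximation term $J^* = \inf_{\alpha\in A_n}\twonorm{f-f_\alpha}^2$ that uses the stronger hypothesis $f\in\sF_k$ and the fact that $\min_{z\in\sX}f_Z(z)>0$ forces $\sX$ to be compact (no density on $\rd$ has a positive infimum), so that $\mu$ is Lebesgue measure on a bounded set. Write $f = \int_\sX k(\cdot,x)\lambda(x)\,dx$ with $\lambda\in L^1(\sX)$ and fix a resolution $r>0$. Partition $\sX$ into $m\asymp r^{-d}$ cells $B_1,\dots,B_m$ of diameter at most $r$, pick representatives $y_j\in B_j$, and form the deterministic quadrature approximant $f_\beta = \sum_{j=1}^m\beta_j k(\cdot,y_j)$ with $\beta_j = \int_{B_j}\lambda(x)\,dx$. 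Minkowski's integral inequality together with the Lipschitz bound in~\nameref{assump:pdkernel} gives $\twonorm{f-f_\beta}\le L_k\,r\,\onenorm{\lambda}$, while $\onenorm{\beta}=\sum_j\abs{\int_{B_j}\lambda}\le\onenorm{\lambda}$.

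Next I would transfer the centers onto the data. Consider the event $E_r$ that every cell $B_j$ contains at least one $Z_i$; on $E_r$, choose for each $j$ a distinct index $i_j$ with $Z_{i_j}\in B_j$ and set $\alpha_{i_j}=\beta_j$, all other coordinates zero. Then $\onenorm{\alpha}=\onenorm{\beta}\le\onenorm{\lambda}$ (an $n$-independent constant depending only on $f$ and $k$), so $\alpha\in A_n$ whenever $R_n\ge\onenorm{\lambda}$, and the Lipschitz bound again gives $\twonorm{f_\beta-f_\alpha}\le L_k\,r\,\onenorm{\lambda}$; since $f_\alpha$ is built from the realized centers $Z_1,\dots,Z_n$, this bounds the same infimum $J^*$ that appears in Lemma~\ref{lemma:oracle}, namely $J^*\le\twonorm{f-f_\alpha}^2\lesssim r^2$. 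To estimate $\prob{}{E_r^c}$, note that since $f_Z$ is bounded below on $\sX$, the probability that a fixed cell is missed by all $n$ of the $Z_i$ is at most $(1-c\,r^d)^n\le e^{-cnr^d}$; a union bound over the $m\asymp r^{-d}$ cells gives $\prob{}{E_r^c}\le r^{-d}e^{-cnr^d}$, which is at most $\delta/2$ once $r^d\asymp\log(n/\delta)/n$. With this choice, $J^*\lesssim(\log(n/\delta)/n)^{2/d}$ with probability at least $1-\delta/2$.

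Finally, apply Lemma~\ref{lemma:oracle} at confidence $1-\delta/2$: solving $\delta/2 = 2n\exp(-(n-1)\epsilon^2/(8C_k^2R_n^2))$ for $\epsilon$ shows the oracle gap is $\epsilon\asymp R_n\sqrt{\log(n/\delta)/n}$, and a union bound over $E_r$ and the oracle event yields, with probability at least $1-\delta$, $\twonorm{f-f_\alphahat}^2\lesssim R_n\sqrt{\log(n/\delta)/n}+(\log(n/\delta)/n)^{2/d}$. Substituting the prescribed scaling of $R_n$ and comparing the two terms then gives the stated rates, the case split turning on whether the approximation term $(1/n)^{2/d}$ decays slower or faster than the estimation term (i.e.\ whether $d>4$), and the logarithmic exponents being simplified by crudely bounding $\log^{2/d}$ by $\log^{1/2}$ in the regime $d>4$. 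I expect the middle step to be the main obstacle: a single pair $(r,R_n)$ must be chosen so that the covering-failure probability, the approximation error $r^2$, and the estimation error $\epsilon$ are all of the desired order simultaneously, and the ``filling'' estimate for $E_r$ must be done carefully (the cells shrink with $n$, so the number of cells and the minimum of $f_Z$ both enter the exponent). A secondary difficulty is that $\lambda$ is assumed only integrable, so every estimate must be routed through $\onenorm{\lambda}$ rather than any smoothness of $\lambda$, and the $d>4$ versus $d\le4$ bookkeeping must be kept straight throughout.
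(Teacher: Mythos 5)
Your proposal is correct, but it reaches the approximation bound by a genuinely different route than the paper. The paper controls $J^*$ in two separate steps: it invokes the Maurey-type result of Gnecco--Sanguineti (Lemma~\ref{lemma:gaussrates2}) to get an $m$-center approximant with sup-norm error of order $m^{-1/2}$ and $\onenorm{\beta}=\onenorm{\lambda}$, then transfers those $m$ centers to nearby data points via a separate covering argument (Lemma~\ref{lemma:gaussrates3}, cost of order $n^{-1/d}\log^{1/d}(m/\delta)$), and finally balances three terms by setting $m=n^{\theta}$ with $\theta=1/(2(1+c))$. You instead build a deterministic quadrature approximant on a grid of $m\asymp r^{-d}$ cells with $\beta_j=\int_{B_j}\lambda$, using the Lipschitz property in~\nameref{assump:pdkernel} plus Minkowski's integral inequality to get error $\lesssim r\onenorm{\lambda}$ and $\onenorm{\beta}\le\onenorm{\lambda}$ for free, and you merge the center-transfer into the same covering event (every cell hit by some $Z_i$), so only two error terms remain and no $\theta$-balancing is needed. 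Your per-center approximation rate ($m^{-1/d}$ versus the paper's $m^{-1/2}$) is worse for $d>2$, but this costs nothing because in both proofs the binding constraint is the same data-covering resolution $r\asymp(\log(n/\delta)/n)^{1/d}$, and indeed your final bounds $n^{-2/d}\log^{1/2}$ ($d>4$) and $n^{-C/2}\log^{2/d}$ ($d\le 4$) coincide with the theorem; your approach buys a more elementary, self-contained argument (no VC-dimension input), while the paper's buys a template that would give strictly better rates if the center-transfer step were ever improved. Two housekeeping points: your feasibility step needs $R_n\ge\onenorm{\lambda}$, which, like the paper's own proof and experiments, implicitly uses the scaling $R_n\sim n^{1/2-2/d}$ for $d>4$ (the exponent $1/2-d/2$ in the statement is evidently a typo, since that $R_n\to 0$ would break feasibility and the claimed $\log^{1/2}$ factor); and the covering estimate you flag does go through, since with $r^d = C'\log(n/\delta)/n$ for a large enough constant the union bound gives $m e^{-c n r^d}\lesssim n(\delta/n)^{c C'}\le\delta/2$.
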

The symbol $\lesssim$ indicates the first term is bounded by a positive constant (independent of $n$ and $d$) times the second term, and $\sim$ means they grow at the same rate. Note the condition $\min_{z\in\sX}\set{f_Z(z)}>0$ is satisfied, for example, if $\sX$ is compact and $f_\Gamma$ is Gaussian. The first step in proving Theorem~\ref{theorem:gaussrates} is just a reformulation of the oracle inequality from Lemma~\ref{lemma:oracle}:
\begin{lemma}\label{lemma:gaussrates1}
Let $\delta_1\in(0,1)$. Let $\set{(X_i,\Gamma_i)}_{i=1}^n$ satisfy assumption~\nameref{assump:data}. Let $k$ satisfy assumption~\nameref{assump:pdkernel} and $f_\alpha$ be as in Equation~\eqref{eq:falpha}. Then with probability $\geq 1 - \delta_1$
\begin{align*}
J(\alphahat)\leq \epsilon_1(n) + J^*
\end{align*}
where $\epsilon_1(n): = \sqrt{8}C_k R_n\sqrt{\frac{\log{(4n/\delta_1)}}{n-1}}$.
\hfill \qed
\end{lemma}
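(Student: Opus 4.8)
The plan is to derive Lemma~\ref{lemma:gaussrates1} directly from the oracle inequality of Lemma~\ref{lemma:oracle}, by (i) translating between the objective $J$ and the $\sL^2_\mu(\sX)$ distance, and (ii) specializing the free parameter $\epsilon$ in Lemma~\ref{lemma:oracle} to the value $\epsilon_1(n)$.

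First I would record the elementary identity $J(\alpha) = \twonorm{f-f_\alpha}^2 - \int_{\sXbar} f^2\,d\mu$, valid for every $\alpha$. This is just the expansion used in Section~\ref{fbkde}: since $f$ vanishes outside $\sXbar$, $\twonorm{f-f_\alpha}^2 = \int_{\sXbar}f^2\,d\mu - 2H(\alpha) + \int_\sX f_\alpha^2\,d\mu = J(\alpha) + \int_{\sXbar}f^2\,d\mu$. Because $\int_{\sXbar}f^2\,d\mu$ is a constant independent of $\alpha$, taking the infimum over $A_n$ preserves it, so $J(\alphahat) - J^* = \twonorm{f-f_\alphahat}^2 - \inf_{\alpha\in A_n}\twonorm{f-f_\alpha}^2$. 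Hence it suffices to show that on an event of probability at least $1-\delta_1$ the right-hand side is at most $\epsilon_1(n)$.

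Next I would check that Lemma~\ref{lemma:oracle} applies. Assumption~\nameref{assump:data} is one of its hypotheses; assumption~\nameref{assump:pdkernel} gives $k(x,x)=C_k$ for all $x\in\sX$, which combined with the reproducing property and Cauchy--Schwarz (as in the preamble to this section) yields $\sup_{x,x'\in\sX}k(x,x')\le C_k$, the remaining hypothesis; and $A_n$ is by definition the $\ell_1$ ball of radius $R_n$. Apply the lemma with $\epsilon = \epsilon_1(n) = \sqrt{8}\,C_k R_n\sqrt{\log(4n/\delta_1)/(n-1)}$. Then $(n-1)\epsilon_1(n)^2/(8C_k^2R_n^2) = \log(4n/\delta_1)$, so the failure probability produced by the lemma is $\delta = 2n\exp(-\log(4n/\delta_1)) = 2n\cdot(\delta_1/4n) = \delta_1/2 \le \delta_1$. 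Therefore, with probability at least $1-\delta_1$ we have $\twonorm{f-f_\alphahat}^2 \le \epsilon_1(n) + \inf_{\alpha\in A_n}\twonorm{f-f_\alpha}^2$, and combining this with the identity of the previous paragraph gives $J(\alphahat)\le\epsilon_1(n)+J^*$ on the same event.

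I do not anticipate a genuine obstacle here: the lemma is essentially a restatement of Lemma~\ref{lemma:oracle} in the $J$-notation that is convenient for the rate computations in Theorem~\ref{theorem:gaussrates}. The only points demanding a little care are the constant bookkeeping --- the definition of $\delta$ in Lemma~\ref{lemma:oracle} carries a factor $2n$ whereas $\epsilon_1(n)$ carries $4n$, which is exactly what turns the equality $\delta=\delta_1$ that a naive inversion would give into the inequality $\delta\le\delta_1$ --- and making sure the constant $\int_{\sXbar}f^2\,d\mu$ is finite (it is whenever $f\in\sL^2_\mu(\sX)$, in particular under the hypotheses of Theorem~\ref{theorem:gaussrates}) so that it drops out cleanly when passing to infima, since $f$, and hence $f^2$, is supported on $\sXbar$.
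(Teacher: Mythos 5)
Your argument is correct and is exactly what the paper intends: it states that Lemma~\ref{lemma:gaussrates1} is ``just a reformulation of the oracle inequality'' of Lemma~\ref{lemma:oracle}, and your proof is that reformulation carried out explicitly, with the constant bookkeeping (inverting $\delta = 2n\exp(-(n-1)\epsilon^2/(8C_k^2R_n^2))$ at $\epsilon=\epsilon_1(n)$ to get failure probability $\delta_1/2\le\delta_1$) and the translation $J(\alpha)=\twonorm{f-f_\alpha}^2-\int_{\sXbar}f^2\,d\mu$ handled correctly. No gaps; this matches the paper's approach.
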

Now, for the $J^*$ term in Lemma~\ref{lemma:gaussrates1} we introduce the function $f_\beta$ as in eqn~\eqref{eq:fbeta} and make the following decomposition, valid for any $\alpha\in A_n$:
$
\twonorm{f - f_\alpha} \leq \twonorm{f - f_\beta} + \twonorm{f_\beta - f_\alpha}.
$
The following lemma concerns the term $\twonorm{f - f_\beta}$, and is taken from \cite{gnecco2008approximation}:
\begin{lemma}\label{lemma:gaussrates2}
Let $f\in\sF_k$. For any $m\in\bbN$ there are m points $\set{y_j}_{j=1}^m\subset\sX$ and $m$ coefficients $\set{c_j}_{j=1}^m\subset\bbR$ such that
\begin{align}
\norm{f - \sum_{i=1}^m\frac{c_j\onenorm{\lambda}}{m} k(\cdot,y_j)}_\infty &\leq \epsilon_2(m) \nonumber
\end{align}
where $\epsilon_2(m): = C\onenorm{\lambda}\sqrt{\frac{V_k}{m}}$ for some absolute constant $C$ and where $V_k$ is the VC-dimension of the set $\set{k(\cdot,x) \,\,\, | \,\,\, x\in \sX}$.
\hfill \qed
\end{lemma}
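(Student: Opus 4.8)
The statement of Lemma~\ref{lemma:gaussrates2} is taken from \cite{gnecco2008approximation}, and the natural route to it is a Maurey--Jones--Barron style empirical approximation argument in which the supremum norm is controlled by a VC/uniform-convergence estimate. Since $f = \int_\sX k(\cdot,x)\lambda(x)\,d\mu(x)$ with $\lambda\in L^1(\sX)$, the plan is first to exhibit $f$ as an average. Set $p(x) := \abs{\lambda(x)}/\onenorm{\lambda}$, a probability density on $\sX$, and $s(x) := \mathrm{sign}(\lambda(x))\in\set{-1,+1}$. Then for every $t\in\sX$, $f(t) = \onenorm{\lambda}\int_\sX s(x)k(t,x)p(x)\,d\mu(x) = \onenorm{\lambda}\,\mathbb{E}_{Y\sim p}\big[s(Y)k(t,Y)\big]$, so $f/\onenorm{\lambda}$ is the mean, over $Y\sim p$, of the bounded random function $s(Y)k(\cdot,Y)$. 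Draw $Y_1,\dots,Y_m$ iid from $p$, put $c_j := s(Y_j)\in\set{-1,+1}$, and consider the random approximant $\widehat f_m := \tfrac{\onenorm{\lambda}}{m}\sum_{j=1}^m c_j k(\cdot,Y_j)$. By construction $\mathbb{E}\big[\widehat f_m(t)\big] = f(t)$ for every $t$, and the weights $\tfrac{c_j\onenorm{\lambda}}{m}$ have $\ell_1$ norm at most $\onenorm{\lambda}$, which is exactly the form required by the lemma.

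The main step is to bound $\mathbb{E}\,\big\| f - \widehat f_m \big\|_\infty = \onenorm{\lambda}\,\mathbb{E}\,\sup_{t\in\sX}\big| \tfrac1m\sum_{j=1}^m c_j k(t,Y_j) - \mathbb{E}[s(Y)k(t,Y)]\big|$, a supremum of a centered empirical process indexed by $t\in\sX$. By the standard symmetrization inequality this is at most $2\onenorm{\lambda}\,\mathbb{E}\,\sup_{t\in\sX}\big|\tfrac1m\sum_j \varepsilon_j k(t,Y_j)\big|$, where $\varepsilon_1,\dots,\varepsilon_m$ are independent Rademacher signs (the fixed signs $s(Y_j)$ are absorbed into the $\varepsilon_j$, which remain Rademacher conditionally on the $Y_j$). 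This equals $2\onenorm{\lambda}$ times the Rademacher complexity of the kernel-section class $\mathcal G := \set{ x\mapsto k(t,x) \mid t\in\sX}$, which has envelope $C_k = \sup_{x,x'}k(x,x')$ and, by hypothesis, VC (subgraph) dimension $V_k$. Using Haussler's bound on the $L^2(Q)$-covering numbers of a VC class, $N(\tau,\mathcal G, L^2(Q))\le (A C_k/\tau)^{B V_k}$ uniformly over $Q$, Dudley's entropy integral $\int_0^{C_k}\sqrt{\log N(\tau,\mathcal G,L^2(Q))}\,d\tau$ is finite and of order $C_k\sqrt{V_k}$ (the logarithmic factor in Haussler's bound integrates to an absolute constant), so the Rademacher complexity is at most a constant times $C_k\sqrt{V_k/m}$. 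Combining, $\mathbb{E}\,\| f - \widehat f_m\|_\infty \le C\,\onenorm{\lambda}\sqrt{V_k/m} = \epsilon_2(m)$ for an absolute constant $C$ (with $C_k$ absorbed).

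It remains to pass from the bound on the expectation to an existence statement: since $\mathbb{E}\,\| f - \widehat f_m\|_\infty \le \epsilon_2(m)$, there is at least one realization $(y_1,\dots,y_m)$ of the centers for which $\big\| f - \tfrac{\onenorm{\lambda}}{m}\sum_{j=1}^m c_j k(\cdot,y_j)\big\|_\infty \le \epsilon_2(m)$ with $c_j = s(y_j)\in\set{-1,1}$, which is the assertion of the lemma. The main obstacle is precisely the estimate in the second paragraph: the classical Maurey--Jones--Barron argument is elementary when the error is measured in a Hilbert-space norm (a one-line variance computation), but upgrading it to the uniform norm $\norm{\cdot}_\infty$ forces control of a supremum over all evaluation points $t$, which is where symmetrization, chaining, and a VC/entropy bound on the kernel sections are needed and where the quantity $V_k$ enters $\epsilon_2(m)$. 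One also needs the (routine) fact that for a radial kernel $k(x,x')=g(\twonorm{x-x'})$ with $g$ strictly monotone, $\mathcal G$ is a measurable VC-subgraph class with a finite envelope, so that Haussler's covering bound applies.
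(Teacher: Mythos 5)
Your argument is correct and is essentially the proof the paper relies on: the paper does not prove this lemma itself but defers to \cite{gnecco2008approximation}, and that line of work (together with Girosi's VC-based approximation bounds cited for $V_k=d+1$) proceeds exactly by your route --- write $f/\onenorm{\lambda}$ as an expectation over the normalized density $\abs{\lambda}/\onenorm{\lambda}$ with signs $\mathrm{sign}(\lambda)$, sample the centers, control the sup-norm deviation of the empirical average via symmetrization, Haussler's covering bound for the VC-subgraph class of kernel sections, and Dudley's entropy integral, and conclude existence by the probabilistic method. The only cosmetic discrepancy is that your constant carries the kernel bound $C_k$, which the lemma's phrase ``absolute constant'' glosses over in the same way as the cited statement, so this is not a gap in your argument.
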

\noindent The VC dimension of a set $\set{g_i}$ is the maximum number of points that can be separated arbitrarily by functions of the form $g_i-r$, $r\in\bbR$. For radial kernels, $V_k = d+1$ (see \cite{girosi1995approximation}, \cite{kon2006approximating}).

Now let $\beta_j = \frac{c_j\onenorm{\lambda}}{m}$ and $f_\beta = \sum_{j=1}^m\beta_j k(\cdot,y_j)$. For the remaining term $\twonorm{f_\beta - f_\alpha}$ we will proceed as in the proof of Theorem~\ref{thm:consistency}. That is, we need that for all $y_j$ there is a point $Z_{i_j}$ close to it, and then we can approximate $f_\beta$ with $f_\alpha = \mysum{\alpha_i k(\cdot,Z_i)}$.
\begin{lemma}\label{lemma:gaussrates3}
Let $\delta_2\in(0,1)$, let $f, m$, and $f_\beta$ be as above. Let $\set{(X_i,\Gamma_i)}_{i=1}^n$ satisfy assumption~\nameref{assump:data}. With probability $\geq 1 - \delta_2$
\begin{align*}
\inf_{\alpha\in A_n}\twonorm{f_\beta - f_\alpha}\leq \epsilon_3(n,m)
\end{align*}
where $\epsilon_3(n,m): = \frac{C}{n^{1/d}}\log^{1/d}{(m/\delta_2)}$, for $C$ a constant independent of $n$ and $m$.
\hfill \qed
\end{lemma}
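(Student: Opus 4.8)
The plan is to produce, with probability at least $1-\delta_2$, an explicit feasible weight vector $\alpha\in A_n$ for which $\twonorm{f_\beta-f_\alpha}\le\epsilon_3(n,m)$; then $\inf_{\alpha\in A_n}\twonorm{f_\beta-f_\alpha}$ is automatically at most this value. Recall from Lemma~\ref{lemma:gaussrates2} that $f_\beta=\sum_{j=1}^m\beta_j k(\cdot,y_j)$ with $y_j\in\sX$ deterministic and $\abs{\beta_j}\le\onenorm{\lambda}/m$, so $\onenorm{\beta}\le\onenorm{\lambda}$; since $\onenorm{\lambda}$ is a fixed constant and $R_n\to\infty$ under the scalings used in Theorem~\ref{theorem:gaussrates}, for all large $n$ we have $R_n\ge\onenorm{\lambda}$, hence any weight vector with $\ell_1$ norm at most $\onenorm{\lambda}$ is admissible. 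The only randomness is in $\set{(X_i,\Gamma_i)}_{i=1}^n$, equivalently in the centers $\set{Z_i}_{i=1}^n$, which are iid with common density $f_Z=f*f_\Gamma$ supported on $\sX$. The idea is that these $n$ centers ought to cover $\sX$ finely enough that each $y_j$ has some $Z_i$ nearby; then reassigning the coefficient $\beta_j$ to that nearby center changes $f_\beta$ little in $\sL^2_\mu$, by the Lipschitz property of $k$.

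First I would fix a radius $r>0$ and bound, for a fixed $y_j$, the probability that $B(y_j,r)$ contains no $Z_i$. Writing $p_{\min}:=\min_{z\in\sX}f_Z(z)>0$ (part of the ambient hypotheses of Theorem~\ref{theorem:gaussrates}) and using a uniform volume lower bound $\mu(B(y,r)\cap\sX)\ge c_\sX r^d$ for all $y\in\sX$ and all $r$ below some $r_0>0$, the probability that $Z_1\in B(y_j,r)$ is at least $p_{\min}c_\sX r^d$, so the probability that none of the $n$ centers lies in $B(y_j,r)$ is at most $(1-p_{\min}c_\sX r^d)^n\le\exp(-n\,p_{\min}c_\sX r^d)$. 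A union bound over $j=1,\dots,m$ then shows that, outside an event of probability at most $m\exp(-n\,p_{\min}c_\sX r^d)$, every $y_j$ admits an index $i_j$ with $\twonorm{Z_{i_j}-y_j}\le r$. Choosing
\[
r\;=\;\paren{\frac{\log(m/\delta_2)}{n\,p_{\min}c_\sX}}^{1/d}\;=\;\frac{C'}{n^{1/d}}\,\log^{1/d}(m/\delta_2),\qquad C':=(p_{\min}c_\sX)^{-1/d},
\]
drives this failure probability down to $\delta_2$; and since $r\to0$ as $n\to\infty$, the constraint $r<r_0$ needed for the volume bound holds for all large $n$.

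On the complementary good event I would define $\alpha$ by $\alpha_i:=\sum_{j\,:\,i_j=i}\beta_j$, with $\alpha_i:=0$ when no $j$ is mapped to $i$. This satisfies $\onenorm{\alpha}\le\onenorm{\beta}\le\onenorm{\lambda}\le R_n$, hence $\alpha\in A_n$, and moreover $f_\alpha=\mysum\alpha_i k(\cdot,Z_i)=\sum_{j=1}^m\beta_j k(\cdot,Z_{i_j})$ irrespective of whether the assignment $j\mapsto i_j$ is injective. Then the triangle inequality together with~\nameref{assump:pdkernel} (Lipschitz constant $L_k$ in the $\sL^2_\mu$ norm) gives
\[
\twonorm{f_\beta-f_\alpha}\;\le\;\sum_{j=1}^m\abs{\beta_j}\,\twonorm{k(\cdot,y_j)-k(\cdot,Z_{i_j})}\;\le\;L_k\sum_{j=1}^m\abs{\beta_j}\,\twonorm{y_j-Z_{i_j}}\;\le\;L_k\,\onenorm{\lambda}\,r,
\]
and $L_k\onenorm{\lambda}r=\frac{C}{n^{1/d}}\log^{1/d}(m/\delta_2)$ with $C:=L_k\onenorm{\lambda}(p_{\min}c_\sX)^{-1/d}$ independent of $n$ and $m$, which is the claimed $\epsilon_3(n,m)$.

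I expect the main obstacle to be the netting step, and specifically the geometric input $\mu(B(y,r)\cap\sX)\ge c_\sX r^d$ uniformly in $y\in\sX$: near the boundary of $\sX$ only the portion of the ball inside $\sX$ carries mass of $f_Z$, so one needs some regularity of $\sX$ (an interior cone or thickness condition, automatic when $\sX$ is, say, a ball or a box) to keep this volume of order $r^d$. Once that is in hand, the probabilistic part is an elementary coupon-collector estimate and the final approximation bound is immediate from Lipschitzness; the only remaining bookkeeping is checking $R_n\ge\onenorm{\lambda}$ so the constructed $\alpha$ is admissible, which holds for all sufficiently large $n$.
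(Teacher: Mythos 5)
Your argument is essentially the paper's own proof: cover each $y_j$ by a nearby random center $Z_{i_j}$ via a union bound with failure probability $m e^{-n p_{\min} c\, r^d}$, transfer the coefficients $\beta_j$ to those centers, and use the Lipschitz property of $k$ together with $\onenorm{\beta}=\onenorm{\lambda}$ to get $\twonorm{f_\beta-f_\alpha}\lesssim \onenorm{\lambda}\,r$, then solve for $r$ to equate the failure probability with $\delta_2$. Your extra care about the boundary volume bound $\mu(B(y,r)\cap\sX)\gtrsim r^d$ and about a possibly non-injective assignment $j\mapsto i_j$ addresses details the paper's proof passes over implicitly, but the route and the resulting constant are the same.
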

Putting Lemmas~\ref{lemma:gaussrates1}, \ref{lemma:gaussrates2}, and \ref{lemma:gaussrates3} together and choosing $m=n^{\theta}$ for an appropriate $\theta$ (the exact form is shown in Section~\ref{proofs}) we obtain the proof of Theorem~\ref{theorem:gaussrates}.

\subsection{Convergence rates for the box kernel}\label{boxrates}
While the previous theorem considered radial SPD kernels, the oracle inequality applies more generally, and in this section we present rates for a nonradial kernel, the box kernel. We assume $\sX=[0,1]^d$ and that the kernel centers are predetermined according to a uniform grid. Precise details are given in the proof of Theorem~\ref{theorem:approx_rates}. Thus the fbKDE centered at $\set{y_i}_{i=1}^{M}$ is
$
\tilde{f}_\alpha:=\sum_{i=1}^{M}\alpha_i k(\cdot,y_i),
$
where the $\alpha$ weights are learned in the same way as before, the only change being the kernel centers. Let $\sF = L_1(\sX)\cap L_2(\sX)\cap \text{Lip}(\sX)$, where $\text{Lip}(\sX)$ are the Lipschitz functions on $\sX$, and let $L_f$ be the Lipschitz constant of $f$. Also, let $k$ be the box kernel $ k(x,y) = \frac{1}{(2\sigma)^{d}} \ind{\norm{x-y}_\infty\leq\sigma}$ defined on $\sX \pm \sigma \times \sX \pm \sigma$, and for simplicity assume $\sigma = \frac{1}{2q}$ for $q$ a positive integer. The following theorem is proved in Section~\ref{proofs}.

\begin{thm}\label{theorem:approx_rates}
Let $f \in \mathcal{F}$, $\supp\set{f_Z}\supset\sX$, $\set{(X_i,\Gamma_i)}_{i=1}^n$ satisfy~\nameref{assump:data}, and $R_n \sim n^{(d-1)/(2d+2)}$. Let $\delta \in \paren{0,1}$. With probability at least $1-\delta$
\begin{align*}
\twonorm{f-\tilde{f}_{\alphahat}}^2 \lesssim \frac{\log^{1/2}(n/\delta)}{n^{1/(d+1)}}.
\end{align*}
\hfill \qed
\end{thm}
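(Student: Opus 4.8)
The plan is to follow the same two–step scheme as in Theorem~\ref{theorem:gaussrates}: bound $\twonorm{f-\tilde f_{\alphahat}}^2$ by an estimation term coming from an oracle inequality plus the approximation term $\inf_{\alpha\in A_n}\twonorm{f-\tilde f_\alpha}^2$, and balance the two against each other. The tunable quantity on the approximation side is the resolution $h$ of a fine auxiliary grid; the choices $h\sim n^{-1/(2d+2)}$ and $R_n\sim n^{(d-1)/(2d+2)}$ make both contributions of order $n^{-1/(d+1)}$, since $R_n/\sqrt n\sim n^{(d-1)/(2d+2)-(d+1)/(2d+2)}=n^{-1/(d+1)}$ and $h^2\sim n^{-1/(d+1)}$.

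\emph{Estimation term.} Because the centers $y_1,\dots,y_M$ are now deterministic rather than data points, I would repeat the proof of Lemma~\ref{lemma:oracle} with the ordinary (not leave–one–out) plug–in estimator $\widehat h_i=\frac1n\sum_{j=1}^n k(X_j,y_i)$, which is unbiased for $h_i=\int_{\sX}k(x,y_i)f(x)\,d\mu(x)$; the quadratic term $\int_{\sX}\tilde f_\alpha^2\,d\mu$ is deterministic. For $\alpha\in A_n$ one has $|J_n(\alpha)-J(\alpha)|=2|\sum_i\alpha_i(\widehat h_i-h_i)|\le 2R_n\max_i|\widehat h_i-h_i|$, and comparing $\alphahat$ to an arbitrary competitor $\alpha$ in the usual way gives $\twonorm{f-\tilde f_{\alphahat}}^2\le \inf_{\alpha\in A_n}\twonorm{f-\tilde f_\alpha}^2 + 4R_n\max_i|\widehat h_i-h_i|$. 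Since $0\le k\le(2\sigma)^{-d}$, Hoeffding's inequality and a union bound over the $M$ centers give $\max_i|\widehat h_i-h_i|\lesssim (2\sigma)^{-d}\sqrt{\log(M/\delta)/n}$ with probability at least $1-\delta$. With $R_n\sim n^{(d-1)/(2d+2)}$ and $M\lesssim h^{-d}\sim n^{d/(2d+2)}\le n$ (so $\log(M/\delta)\le\log(n/\delta)$), this term is $\lesssim n^{-1/(d+1)}\log^{1/2}(n/\delta)$.

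\emph{Approximation term (the crux).} Introduce an auxiliary grid of resolution $h\sim n^{-1/(2d+2)}$, chosen compatibly with $\sigma$ so that the faces of every box $[y_j-\sigma,y_j+\sigma]^d$ fall on fine-cell boundaries; set $K:=\sigma/h$, a positive integer, and take the $M\lesssim h^{-d}$ kernel centers to be the fine-grid centers lying within $\ell_\infty$-distance $\sigma$ of $[0,1]^d$, so that every fine cell $J_m\subseteq[0,1]^d$ is covered by a full block of $(2K+1)^d$ of the boxes. On $J_m$, $\tilde f_\alpha$ is the constant $(2\sigma)^{-d}\sum_{\norm{j-m}_\infty\le K}\alpha_j$, so the oracle weight vector $\alpha^\ast$ should be chosen to solve, for every such $m$, the linear system $\sum_{\norm{j-m}_\infty\le K}\alpha^\ast_j=(2\sigma)^d\bar f_m$, where $\bar f_m$ is the average of $f$ over $J_m$; then $\tilde f_{\alpha^\ast}$ agrees on $[0,1]^d$ with the cell-average step function of $f$, and Lipschitzness of $f$ gives $\twonorm{f-\tilde f_{\alpha^\ast}}^2\le (L_f\sqrt d\,h)^2\mu(\sX)\lesssim h^2$. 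The remaining—and main—obstacle is to show $\alpha^\ast\in A_n$, i.e.\ $\onenorm{\alpha^\ast}\lesssim R_n$. The system is a separable discrete ``deconvolution'' of the $K$-windowed sum, which I would invert one coordinate at a time, each coordinate amounting to a telescoping recursion along residue classes modulo $2K+1$. The key facts are that each residue class contains only $O(1/\sigma)=O(q)$ cells, and that the recursion expresses each update as a mixed finite difference of the cell-averages of $f$ with step $h$—and every mixed difference of order $\le d$ of a Lipschitz function is $O(h)$ (crudely, an order-$r$ difference is at most $2^{r-1}$ times a first difference). Running these $O(q)$-term accumulations through all $d$ coordinates keeps $|\alpha^\ast_j|\lesssim h$ for every $j$, with a constant depending only on $q$ and $d$, whence $\onenorm{\alpha^\ast}\lesssim h^{-d}\cdot h=h^{-(d-1)}\lesssim R_n$. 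Extending the center grid past $[0,1]^d$ by $\sigma$ is precisely what makes the system solvable, since it ensures that the boundary cells of $[0,1]^d$ are still covered by full $(2K+1)^d$-blocks.

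Combining the two contributions and invoking the oracle inequality with competitor $\alpha^\ast$ yields $\twonorm{f-\tilde f_{\alphahat}}^2\lesssim n^{-1/(d+1)}\log^{1/2}(n/\delta)+n^{-1/(d+1)}\lesssim n^{-1/(d+1)}\log^{1/2}(n/\delta)$ with probability at least $1-\delta$, as claimed. The delicate point throughout is the $\ell_1$ bound on the oracle weights $\alpha^\ast$: one must verify both that inverting the box-kernel averaging is possible (which forces the enlarged grid of centers) and that the amplification it incurs is polynomial of exactly the order the budget $\onenorm{\alpha^\ast}\lesssim R_n$ permits, which is what pins down $h\sim n^{-1/(2d+2)}$ and, through it, $R_n\sim n^{(d-1)/(2d+2)}$.
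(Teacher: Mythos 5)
Your proposal is correct and follows essentially the same route as the paper: a Hoeffding-plus-union-bound oracle inequality over the $M$ deterministic grid centers giving an estimation term of order $R_n\sqrt{\log(M/\delta)/n}$, together with an exact representation of a resolution-$h$ piecewise-constant approximant of $f$ by box kernels on a grid enlarged by $\sigma$, where a telescoping (deconvolution) recursion yields $\onenorm{\alpha^\ast}\lesssim h^{-(d-1)}\sim R_n$, and finally balancing $R_n/\sqrt{n}$ against $h^2$ to get $n^{-1/(d+1)}\log^{1/2}(n/\delta)$. The differences are only in execution (cell averages and coordinate-wise separable inversion with mixed differences in your sketch, versus the paper's sampled values and explicit multi-index recursion in Lemma~\ref{lemma:simplefunction}; plug-in rather than leave-one-out estimates of the $h_i$, which is fine since the centers are deterministic), with the minor caveat that your per-coefficient bound $\abs{\alpha^\ast_j}\lesssim h$ has constants depending on $L_f$ and $\sup f$ as well as on $q$ and $d$.
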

As with the previous results, the stochastic error is controlled via the oracle inequality. Whereas the preceding results leveraged known approximation properties of radial SPD kernels, in the case of the box kernel we give a novel argument for approximating Lipschitz functions with box kernels having a non-vanishing bandwidth.
\section{Experimental Results}\label{experiments}
We now show the performance of the fbKDE as compared to the KDE and variable bandwidth KDE. The variable bandwidth KDE (\cite{comaniciu2001variable}), which we refer as the vKDE, has the form $f_{vKDE} = \frac{1}{n}\sum_{i=1}^n k_{\sigma_i}(x,X_i) $ where each data point has an individual bandwidth $\sigma_i$. \cite{comaniciu2001variable} proposes to use $\sigma_i = \sigma\lambda^{1/2}\paren{f_{KDE}(X_i)}^{-1/2}$ where $\sigma$ is the standard KDE's bandwidth parameter and $\lambda$ is the geometric mean of $\set{f_{KDE}(X_i)}_{i=1}^n$.

When implementing the fbKDE, there are a few considerations. First, when computing $\int_\sX f_\alpha^2(x)d\mu(x)$, the first term of $J_n(\alpha)$, we must compute the integral $\int_\sX k(x,z_i)k(x,z_j)d\mu(x)$. For computational considerations we assume $\sX=\rd$ and $\mu$ is the Lebesgue measure. This deviates from our theory, which requires finite $\mu$ for $\sX=\rd$. Thus, for the Gaussian kernel, which we use in our experiments, this leads to $\int_\sX k(x,z_i)k(x,z_j)d\mu(x) = k_{\sqrt{2}\sigma}(z_i,z_j)$. To obtain the $\alpha$ weights we have to solve a quadratic program. We used the ADMM algorithm described in~\cite{boyd2011distributed}, with the aid of the $\ell_1$ projection algorithm from~\cite{duchi2008efficient}.

\begin{figure}[t]
\begin{minipage}[t]{0.49\textwidth}
\includegraphics[width=1\textwidth]{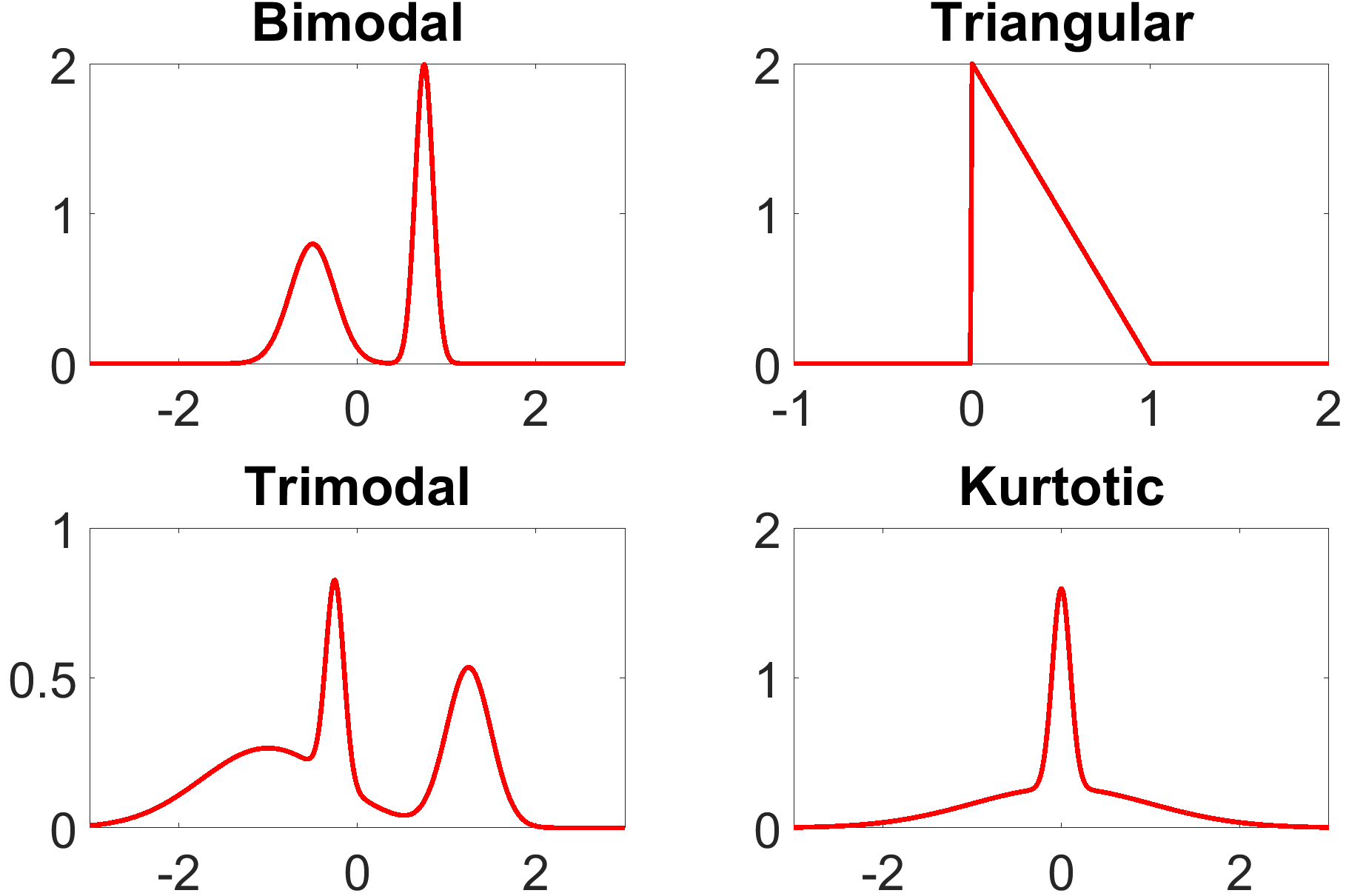}
\caption{Bimodal, triangular, trimodal and kurtotic densities used to evaluate the fbKDE performance.}
\label{fig:densities}
\end{minipage}
\hspace*{\fill} 
\begin{minipage}[t]{0.49\textwidth}
\includegraphics[width=1\textwidth]{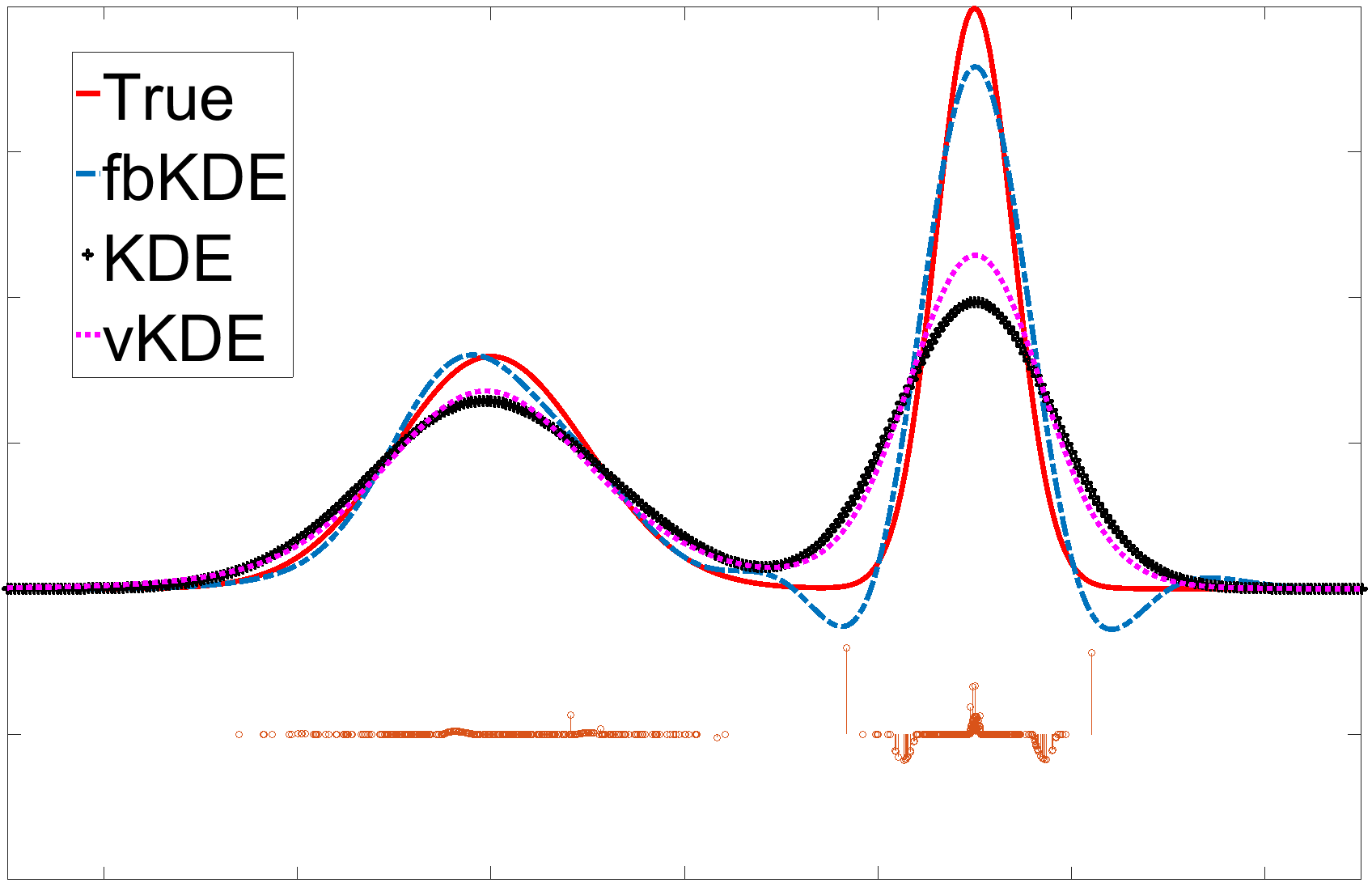}
\caption{Bimodal density and kernel estimators with training size $800$. The stem subplot indicates the values of $\alpha$ (centered offset for visualization). Note that some of the $\alpha$ weights are negative.}
\label{fig:bimodal}
\end{minipage}
\end{figure}

We examine a few benchmark real datasets as well as synthetic data from 1-dimensional densities. The synthetic densities are the triangular density as well as three Gaussian mixtures, a bimodal, a trimodal and a kurtotic, as shown in Figure~\ref{fig:densities}. We computed the parameters $\sigma$, $R_n$, and $\sigma_\gamma$ in two different ways. First we used rules of thumb. For $\sigma$, we used Silverman's rule of thumb (\cite{silverman86}). For $R_n$ we used, based on the convergence rates, $(n/\log(n))^{1/3}$ for $d\leq4$ and $(n/\log(n))^{(1/2-2/d)}$ for $d>4$. For $\sigma_\gamma$ we used, inspired by the Jaakkola heuristic~\cite{jaakkola1999using}, the median distance to the $5^{th}$ nearest neighbor. Second we used a $V$-fold CV procedure over $100$ parameter combinations drawn randomly from $\Theta_d$, with $V=2$ for $n>1000$ and $V=3$ otherwise (see~\cite{arlot2016choice}). $\Theta_d:=[.1,.5]_{\ell}\times[1.1,2\sqrt{(n)}]_{\ell}\times[.001,.1]_{\ell}$ for $d\leq 4$, and $\Theta_d:=[.1,1]_{\ell}\times[1.1,2n^{(1/2-2/d)}]_{\ell}\times[.001,.1]_{\ell}$ for $d>4$, where the subscript $\ell$ indicates logarithmic spacing and where the range is chosen thus since the data is standardized and, for the $R_n$ range, informed by the convergence rates. Finally, we used $n$ to be $4/5$ of the original data size for training and the rest for testing. We compute the value $J^T_n$ as $\int{f_\alphahat}^2 - 2\sum_{i=1}^n\alphahat_i\frac{1}{n_T}\sum_{\ell=1}^{n_T}k(x_\ell^{(T)},z_i)$, where $\set{x_\ell^{(T)}}_{\ell=1}^{n_T}$ is the test set. Figure~\ref{fig:bimodal} shows the bimodal density, the fbKDE, KDE, and vKDE along with the $\alpha$ values for the fbKDE. Table~\ref{tab:grand-comparison} shows the $J_n(\alphahat)^T$ error as well as the $\norm{\cdot}_{\infty}$ error, where for some function $g$, $\norm{g}_\infty:=\max_{x\in\sX}\abs{g(x)}$.

In Figure~\ref{fig:bimodal} the density has two Gaussian elements with different widths. It is difficult for KDE and even for the vKDE to approximate such a density. The fbKDE, however, is able to approximate components of different smoothness by making some weights negative. These weights subtract excess mass in regions of refinement. Note that by doing so the fbKDE may itself overshoot and become negative. A similar behavior is exhibited for other densities, in which smoothness varies across regions. In Table~\ref{tab:grand-comparison} we report the performance of the three estimators for both CV and rule of thumb. Note the fbKDE often performs better in terms of the $\norm{\cdot}_\infty$, and when the bandwidth is chosen according to a rule of thumb. The fbKDE outperforms both the KDE and vKDE in about half of the cases.

Finally we show, for the bimodal density, a comparison of the performance as the sample size grows. We have chosen the parameters according to the rules of thumb discussed above. Table~\ref{tab:sample_size} presents the errors. Note that as the sample size grows the KDE and vKDE do not significantly improve, even though the bandwidth is being updated according to Silverman's rule. The fbKDE leverages new observations and refines its approximation, and this effect is more obvious for the $\norm{\cdot}_\infty$ case. Indeed, the $\norm{\cdot}_\infty$ error for fbKDE is smaller at $n=50$ than at for KDE and vKDE at $n=2050$. Similar results hold for the other synthetic datasets. This highlights a notable property of the fbKDE, that it can handle densities with differing degrees of smoothness.

\begin{table}[t]
  \centering
  \caption{Performance comparison for different datasets and bandwidth selection methods. For the synthetic datasets we drew 1000 samples, $n=800$ of which were used for training.}
\scalebox{1}{
    \begin{tabular}{ll rrr rrr}
    \toprule
	& & \multicolumn{3}{c}{Rule of Thumb} &\multicolumn{3}{c}{Cross-Validation}\\
    \midrule
	& & fbKDE & KDE & vKDE & fbKDE & KDE & vKDE \\
	\midrule
\multirow{2}{*}{Bimodal} & $J_n^T$ & {\bf -1.0180}  & -0.8110 & -0.8765 & {\bf-1.0660}  & -0.9785 &  -1.0413\\
							& $\norm{\cdot}_\infty$ & {\bf0.2287}  &1.0141&  0.8468 &{\bf0.1865}   & 0.5534 &   0.2782\\
\multirow{2}{*}{Triangular} & $J_n^T$ &-1.2889  &-1.2897  &{\bf-1.2958}&{\bf-1.2332}  & -1.2095  & -1.2073\\
							& $\norm{\cdot}_\infty$ &{\bf1.0121} &1.0200 &1.0923&1.1599  &  {\bf1.1437}   & 1.2242\\
\multirow{2}{*}{Trimodal} & $J_n^T$ &{\bf-0.3317} &-0.2919 &-0.3025&-0.3457  & {\bf-0.3379}  & -0.3456\\
							& $\norm{\cdot}_\infty$ &{\bf0.2335}  &0.4571 &0.4156&0.1212  &  0.1879  &  {\bf0.1032}\\
\multirow{2}{*}{Kurtotic} & $J_n^T$ &{\bf-0.5444} &-0.4735 &-0.5271&-0.5831  & -0.5647  & {\bf-0.5894}\\
							& $\norm{\cdot}_\infty$ &{\bf0.2800} &0.8122 &0.5540&0.1379  &  0.3902  &  {\bf0.1181}\\
Banana & $J_n^T$ & -0.0838 & -0.0821 & {\bf-0.0839}  & -0.0821 & -0.0837 & {\bf-0.0853} \\
Ringnorm & $J_n^T$  & 2.4E-09 & -2.3E-10 & {\bf-2.7E-10}  & -1.7E-10 & -3.2E-10 & {\bf-3.5E-10} \\
Thyroid & $J_n^T$  & -0.0932 & -0.0448 & {\bf-0.1415}  &    {\bf-0.2765} & -0.2514 & -0.2083 \\
Diabetes & $J_n^T$  & -1.4E-05 & -0.0004 & {\bf-9.8E-04}  &    {\bf-0.0010} & -0.0007 & {\bf-0.0010} \\
Waveform & $J_n^T$  & 1.5E-09 & {\bf-1.2E-11} & 1.25E-11  & -2.1E-12 & -1.2E-11 & {\bf-1.25E-11} \\
Iris & $J_n^T$  & 0.0166 & {\bf-0.0204} & 0.0058  &    {\bf-0.0102} & 0.0027 & 0.0777 \\
    \bottomrule
    \end{tabular}%
}
  \label{tab:grand-comparison}%
\end{table}%

\begin{table}[h]
\centering
\caption{Performance comparison with respect to the sample size for the bimodal density.}
\scalebox{1}{
\begin{tabular}{ll rrr rrr r}
\toprule
&& \multicolumn{7}{c}{Sample size $n$}\\
\midrule
&& 50&250&450&1050&1650&1850&2050\\
\midrule
\multirow{3}{*}{$\norm{\cdot}_\infty$ error}
&fbKDE &{\bf0.7046}&{\bf0.5847}&{\bf0.4836}&{\bf0.3549} &{\bf0.1807}  &{\bf  0.1642}   & {\bf0.1761}\\
&KDE    &1.1341&1.0567&1.1451&1.0833 & 0.9684  &  0.9420    &0.9160\\
&vKDE   &1.0287&0.8811&1.0459&0.9670 &0.8106  &  0.7562    &0.7300\\
\midrule
\multirow{3}{*}{$J_n^T$ error} & fbKDE&{\bf-0.8985}  &{\bf -0.9623}& {\bf  -0.7487} &{\bf-1.0788} &{\bf-0.9787}  & {\bf-1.0493} & {\bf -0.9722}\\
&  KDE &-0.7099 &  -0.7639 &  -0.6859 &-0.8220&-0.8284   &-0.8728   &-0.8277\\
&  vKDE &-0.7763  &-0.8284   &-0.7091  &-0.8793   &-0.8782  & -0.9372   &-0.8839\\
\bottomrule
\end{tabular}%
}
\label{tab:sample_size}
\end{table}


\section{Conclusion}

We have presented a new member of the family of kernel estimators, the {\em fixed bandwidth kernel density estimator}, with desirable statistical properties. In particular, we showed the fbKDE is consistent for fixed kernel parameter $\sigma$, and we provided convergence rates. The fbKDE is a good alternative to the KDE in cases where computing an optimal bandwidth is difficult and for densities that are hard to approximate with inappropriate bandwidths. In these cases and as is shown in the experimental section, the fbKDE can greatly improve on the KDE. The way in which fbKDE achieves a more refined approximation is by balancing properly placed positive and negative weights, sometimes outside of the original support, which is facilitated by the $\Gamma$ variables, and which is not possible with the standard KDE. A few problems of interest remain open. We have illustrated two possible rate of convergence results, but expect many other such results are possible, depending on the choice of kernel and smoothness assumptions. It also remains an open problem to extend our results to more general domains $\sX$ and to dependent data.

\section{Proofs}\label{proofs}
\subsection{Oracle Inequality}
First recall the oracle inequality lemma:

\begin{lemma*}[Lemma~\ref{lemma:oracle}]
Let $\epsilon>0$. Let $\set{(X_i,\Gamma_i)}_{i=1}^n$ satisfy assumption~\nameref{assump:data}. Let $k$ satisfy assumption~\nameref{assump:pdkernel} and $f_\alpha$ be as in Equation~\eqref{eq:falpha}. Let $\delta = 2n\exp\paren{- \frac{(n-1)\epsilon^2}{8C_k^2R_n^2}}$. With probability $\geq 1-\delta$,
\begin{align*}
\twonorm{f - f_{\alphahat}}^2 \leq \epsilon + \inf_{\alpha\in A_n}\twonorm{f - f_\alpha}^2.
\end{align*}
\end{lemma*}

\begin{proof}[Proof of Lemma \ref{lemma:oracle}]
Recalling the following definition from Section~\ref{fbkde}, we have
\begin{align*}
h_i = h_i(X_i,\Gamma_i) := \int_{\sXbar}k(x,X_i+\Gamma_i)f(x)d\mu(x),
\end{align*}
and
\begin{align*}
\widehat{h}_i = \widehat{h}_i (X,\Gamma_i): = \frac{1}{n-1}\sum_{j\neq i}k(X_j,X_i+\Gamma_i),
\end{align*}
were we have used the simplified notation $(X,\Gamma_i)$ to represent $(X_1,\dots,X_n,\Gamma_i)$. To simplify notation further, we  let $X_{/i}$ represent $(X_1,\dots,X_{i-1},X_{i+1},\dots,X_n)$ and use $\prob{X,\Gamma}{\cdot}$ for $\prob{X_1,\dots,X_n,\Gamma_1,\dots,\Gamma_n}{\cdot}$, and the same goes for $\expec{X,\Gamma}{\cdot}$. We now look at the probability that $H_n(\alpha)$ is close to $H(\alpha)$. We have
\begin{align}
\prob{X,\Gamma}{\sup_{\alpha \in A_n} \abs{H_n(\alpha) - H(\alpha)}>\epsilon}& = \prob{X,\Gamma}{\sup_{\alpha \in A_n} \abs{\mysum\alpha_i\hih(X,\Gamma_i) - \mysum\alpha_i h_i(X_i,\Gamma_i)}>\epsilon} \nonumber \\
& = \prob{X,\Gamma}{\sup_{\alpha \in A_n} \abs{\mysum \alpha_i (\hih(X,\Gamma_i) - h_i(X_i,\Gamma_i))}>\epsilon} \nonumber \\
& \leq \prob{X,\Gamma}{\sup_{\alpha \in A_n} \mysum \abs{\alpha_i} \abs{\hih(X,\Gamma_i) - h_i(X_i,\Gamma_i)}>\epsilon} \nonumber \\
& \leq \prob{X,\Gamma}{R_n \max_{1\leq i\leq n} \abs{\hih(X,\Gamma_i) - h_i(X_i,\Gamma_i)}>\epsilon} \nonumber \\
& \leq \mysum \prob{X,\Gamma}{\abs{\hih(X,\Gamma_i) - h_i(X_i,\Gamma_i)}>\frac{\epsilon}{R_n}}. \nonumber
\end{align}
\begin{sloppypar}
Now let $\sA_i:=\set{(x_1,\dots,x_n,\gamma_i)\in \sXbar^n\times \supp\set{f_\Gamma} \mid \abs{\hih(x,\gamma_i)-h_i(x_i,\gamma_i)}>\frac{\epsilon}{R_n}}$ and note that $\prob{X,\Gamma}{\sA_i}=\prob{X,\Gamma_i}{\sA_i}$. We have
\begin{align*}
\prob{X,\Gamma_i}{(X_1,\dots,X_n,\Gamma_i)\in\sA_i}
& = \int_{\supp\set{f_\Gamma}}{\prob{X\mid\Gamma_i}{(X_1,\dots,X_n,\gamma_i)\in\sA_i \mid \Gamma_i = \gamma_i} f_\Gamma(\gamma_i)d\gamma_i}\\
& = \int_{\supp\set{f_\Gamma}}{\prob{X}{(X_1,\dots,X_n,\gamma_i)\in\sA_i} f_\Gamma(\gamma_i)d\gamma_i},
\end{align*}
by the independence assumption of~\nameref{assump:data}. Furthermore,
\begin{align*}
\prob{X}{(X_1,\dots,X_n,\gamma_i)\in\sA_i} 
&=\int_{\sXbar}{\prob{X_{/i}\mid X_i}{(X_1,\dots,x_i,\dots,X_n,\gamma_i)\in\sA_i \mid X_i=x_i} f(x_i)d\mu(x_i)}\\
&=\int_{\sXbar}{\prob{X_{/i}}{(X_1,\dots,x_i,\dots,X_n,\gamma_i)\in\sA_i} f(x_i)d\mu(x_i)}.
\end{align*}
We now bound the term inside the integral. Since this quantity only depends on $x_i$, $\gamma_i$ through $x_i+\gamma_i$, we abbreviate it as $\prob{X_{/i}}{(X_{/i},z_i)\in\sA_i}$, where $z_i=x_i+\gamma_i$. First, note that $h_i(x_i,\gamma_i) = \expec{X_j}{k(X_j,z_i)}$ for any $j\neq i$. Also, by assumption there is a $C_k$ such that $k(x,x') \leq C_k$ for all $x,x'$. Hence,
\begin{align}
\prob{X_{/i}}{(X_{/i},z_i)\in\sA_i} &= \prob{X_{/i}}{\abs{\frac{1}{n-1}\sum_{\substack{j=1 \\ j\neq i}}^n k(X_j,z_i) - \expec{X_{/i}}{\frac{1}{n-1}\sum_{\substack{j=1 \\ j\neq i}}^n k(X_j,z_i)}} > \epsilon} \nonumber \\
&= \prob{X_{/i}}{\abs{\frac{1}{n-1}\sum_{\substack{j=1 \\ j\neq i}}^n k(X_j,z_i) - \frac{1}{n-1}\expec{X_{/i}}{\sum_{\substack{j=1 \\ j\neq i}}^n k(X_j,z_i)} }> \epsilon} \nonumber \\
&= \prob{X_{/i}}{\abs{\sum_{\substack{j=1 \\ j\neq i}}^n k(X_j,z_i) - \expec{X_{/i}}{\sum_{\substack{j=1 \\ j\neq i}}^n k(X_j,z_i)}} > (n-1)\epsilon} \nonumber \\
&\leq 2\expon{- \frac{2(n-1)\epsilon^2}{C_k^2}} , \nonumber
\end{align}
where we have used Hoeffding's inequality. So we obtain
\begin{align*}
\prob{X,\Gamma}{\abs{\hih - h_i}>\frac{\epsilon}{R_n}} &= \int_{\supp\set{f_\Gamma}} \int_{\sXbar}\prob{X_{/i}}{(X_{/i},z_i)\in\sA_i}f(x_i)f_{\Gamma}(\gamma_i)d\mu(x_i)d\gamma_i \\
&\leq 2\expon{- \frac{2(n-1)\epsilon^2}{C_k^2R_n^2}} \cdot \int_{\supp\set{f_\Gamma}}f_{\Gamma}(\gamma_i)d\gamma_i \cdot \int_{\sXbar}f(x_i)d\mu(x_i) \\
& = 2\expon{- \frac{2(n-1)\epsilon^2}{C_k^2R_n^2}},
\end{align*}
and
\begin{align*}
\prob{X,\Gamma}{\sup_{\alpha \in A_n} \abs{H_n(\alpha) - H(\alpha)}>\epsilon} & \leq 2n\expon{- \frac{2(n-1)\epsilon^2}{C_k^2R_n^2}}.
\end{align*}
\end{sloppypar}
Therefore, letting $\delta = 2n\expon{- \frac{2(n-1)\epsilon^2}{C_k^2R_n^2}}$, for any $\alpha \in A_n$
\begin{align}
\prob{X,\Gamma}{\abs{{J}_n(\alpha) - J(\alpha)} \leq 2\epsilon} & = \prob{X,\Gamma}{\abs{H_n(\alpha) - H(\alpha)} \leq \epsilon} \nonumber\\
& = 1 - \prob{X,\Gamma}{\abs{H_n(\alpha) - H(\alpha)} > \epsilon} \nonumber\\
& \geq 1-\delta. \nonumber
\end{align}

So, with probability $\geq 1-\delta$, ${J}_n(\alpha) \leq J(\alpha) + 2\epsilon$, and $J(\alpha) \leq {J}_n(\alpha) + 2\epsilon$. Recall that ${J}_n({\alphahat}) \leq {J}_n(\alpha)$ for all $\alpha\in A_n$. Then with probability $\geq 1-\delta$,
$$
J({\alphahat}) \leq \inf_{\alpha\in A_n} J(\alpha) + 4\epsilon.
$$
If we substitute $\epsilon'=4\epsilon$ we obtain the desired result.
\end{proof}

\subsection{Consistency of $f_\alphahat$}
In the following we will make use of the fact that, for continuous positive definite radial kernels, the RKHS norm dominates the $\sup$-norm which in turn dominates the $\ltwomu(\sX)$ norm. We state this as a lemma.

\begin{lemma}\label{lemma:norms}
Let $k$, $\sX$, $\mu$ satisfy assumptions~\nameref{assump:xset} and~\nameref{assump:pdkernel}. Then for any $h\in\sH$ we have
\begin{align*}
\twonorm{h}\leq\mu^{1/2}(\sX)\norm{h}_{\infty}\leq\mu^{1/2}(\sX)C_k^{1/2}\norm{h}_{\sH}.
\end{align*}
\end{lemma}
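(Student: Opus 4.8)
The plan is to obtain the inequality chain by combining two elementary facts: the finiteness of $\mu(\sX)$ for the left inequality, and the reproducing property together with $k(x,x)=C_k$ for the right inequality.

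First I would treat the left inequality. For $h\in\sH$, the pointwise bound $h(x)^2\le\norm{h}_\infty^2$ on $\sX$ gives, upon integrating against $\mu$, that $\twonorm{h}^2=\int_\sX h^2\,d\mu\le\norm{h}_\infty^2\,\mu(\sX)$; taking square roots yields $\twonorm{h}\le\mu^{1/2}(\sX)\norm{h}_\infty$. The two things to note here are that $\mu(\sX)<\infty$, which is precisely what assumption~\nameref{assump:xset} provides (Lebesgue measure on a compact $\sX$, or a finite measure when $\sX=\rd$), and that $\norm{h}_\infty<\infty$, which follows from the second step below.

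Next I would treat the right inequality. By the reproducing property, $h(x)=\inpr{h,k(\cdot,x)}_\sH$ for every $x\in\sX$, so Cauchy--Schwarz in $\sH$ gives $\abs{h(x)}\le\norm{h}_\sH\norm{k(\cdot,x)}_\sH=\norm{h}_\sH\sqrt{k(x,x)}=C_k^{1/2}\norm{h}_\sH$, where the last equality uses that $k(x,x)=C_k$ for all $x$ by assumption~\nameref{assump:pdkernel}. Taking the supremum over $x\in\sX$ yields $\norm{h}_\infty\le C_k^{1/2}\norm{h}_\sH$; in particular $h$ is bounded, which justifies the use of $\norm{h}_\infty$ in the first step. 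Chaining the two inequalities gives the claim.

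I do not anticipate any real obstacle. The only subtleties are the implicit finiteness of $\mu(\sX)$, handled by~\nameref{assump:xset}, and the reading of $\norm{h}_\infty$ as a genuine supremum rather than an essential supremum — this is legitimate because the RKHS of a continuous kernel consists of continuous functions, so the two notions coincide and no measure-theoretic care is needed.
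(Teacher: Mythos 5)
Your proof is correct and follows essentially the same route as the paper: bound the $\sL^2_\mu$ norm by $\mu^{1/2}(\sX)\norm{h}_\infty$ using finiteness of $\mu(\sX)$ from~\nameref{assump:xset}, then bound $\norm{h}_\infty$ via the reproducing property, Cauchy--Schwarz, and $k(x,x)=C_k$ from~\nameref{assump:pdkernel}. You even handle the same subtlety the paper does (supremum versus essential supremum, via continuity of RKHS elements for a continuous kernel), so nothing further is needed.
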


\begin{proof}
By~\nameref{assump:xset} and~\nameref{assump:pdkernel} $k$ is bounded and continuous, and by Lemma~4.28 of \cite{steinwart08}, so is every element of $\sH$. Hence, for $h\in\sH$ and for $\sX$ either compact or all of $\rd$ the essential supremum equals the supremum, so we obtain
\begin{align*}
\twonorm{h} &=\paren{\int_{\sX}{\abs{h(x)}^2d\mu(x)}}^{1/2} \\
&\leq \mu^{1/2}(\sX) \sup_{x\in\sX}\set{\abs{h(x)}} \\
& = \mu^{1/2}(\sX)\norm{h}_\infty \\
& \leq \mu^{1/2}(\sX) \sup_{x\in\sX}\set{\abs{\inpr{h,k(\cdot,x)}_{\sH}}} \\
& \leq \mu^{1/2}(\sX) C_k^{1/2} \norm{h}_\sH
\end{align*}
where the penultimate inequality follows from the reproducing property and the last inequality is just Cauchy-Schwarz.
\end{proof}

Now to prove Theorem~\ref{thm:consistency} we need a couple intermediate lemmas. First, recall
\begin{align*}
f_\alpha := \mysum \alpha_i k(\cdot,Z_i),
\end{align*}
and
\begin{align*}
\sH^0 : = \set{\sum^N_j c_j k(\cdot,y_j) | y_j \in \sX, c_i \in \bbR, \text{and}\,\,\,N\in\mathbb{N}}.
\end{align*}
Note that $\sH^0$ is dense in $\sH$ (\cite{steinwart08}). Assume $\sH$ is dense in $\ltwomu(\sX)$. Then, given $\epsilon>0$ and $\alpha\in A_n$, there is an $f_\sH\in\sH$ and $f_\beta\in\sH^0$ of the form
\begin{align*}
f_\beta: = \sum_{j=1}^m{\beta_j k(\cdot,y_j)}
\end{align*}
such that $\twonorm{f-f_\sH}<\epsilon$ and $\norm{f_\sH-f_\beta}_{\sH}<\epsilon$. We use these functions to bound $\twonorm{f-f_\alpha}$:
\begin{align*}
\twonorm{f_\alpha - f} \leq \twonorm{f_\alpha - f_\beta} + \twonorm{f_\beta - f_\sH} + \twonorm{f_\sH - f},
\end{align*}
and show the last two terms are small in the following lemma.
\begin{lemma}\label{lemma:decomposition}
Let $\epsilon>0$. Let $k$ satisfy assumption~\nameref{assump:pdkernel} and $f\in L_{\mu}^{2}(\sX)\cap C(\sX)$. Then
\begin{align}
\twonorm{f_\alpha - f} \leq \twonorm{f_\alpha - f_\beta} + 2\mu(\sX)^{1/2}\epsilon . \nonumber
\end{align}
\end{lemma}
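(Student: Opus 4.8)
The plan is simply to apply the triangle inequality to the decomposition already displayed before the statement,
$$\twonorm{f_\alpha - f} \le \twonorm{f_\alpha - f_\beta} + \twonorm{f_\beta - f_\sH} + \twonorm{f_\sH - f},$$
and then absorb the last two terms into $2\mu(\sX)^{1/2}\epsilon$, leaving the term $\twonorm{f_\alpha - f_\beta}$ untouched (it is the one controlled later via a good choice of the centers $Z_i$ and weights $\alpha$). So the whole content of the lemma is to bound $\twonorm{f_\sH - f}$ and $\twonorm{f_\beta - f_\sH}$ separately, each by $\mu(\sX)^{1/2}\epsilon$.

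For $\twonorm{f_\sH - f}$, I would use that $f\in C(\sX)$ and that, for a radial SPD kernel, $\sH$ is dense in $\sF = \ltwomu(\sX)\cap C(\sX)$ in the supremum norm, by \cite{scovel2010radial}; this is precisely how $f_\sH$ was chosen. Picking $f_\sH$ from such a sup-norm-dense subset with $\norm{f-f_\sH}_\infty < \epsilon$ and using finiteness of $\mu$ (assumption~\nameref{assump:xset}) gives $\twonorm{f-f_\sH} \le \mu(\sX)^{1/2}\norm{f-f_\sH}_\infty < \mu(\sX)^{1/2}\epsilon$.

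For $\twonorm{f_\beta - f_\sH}$, the key observation is that $f_\beta\in\sH^0\subset\sH$, hence $f_\beta - f_\sH\in\sH$ and Lemma~\ref{lemma:norms} applies, yielding $\twonorm{f_\beta - f_\sH} \le \mu(\sX)^{1/2}C_k^{1/2}\norm{f_\beta - f_\sH}_{\sH}$. Since $\sH^0$ is dense in $\sH$ in the RKHS norm \cite{steinwart08}, one may choose $f_\beta = \sum_{j=1}^m\beta_j k(\cdot,y_j)$ with $\norm{f_\beta - f_\sH}_{\sH} < \epsilon/C_k^{1/2}$, so that $\twonorm{f_\beta - f_\sH} < \mu(\sX)^{1/2}\epsilon$. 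Adding the two estimates completes the proof.

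I do not expect a genuine obstacle: this is a bookkeeping lemma whose ingredients — Lemma~\ref{lemma:norms}, the two denseness results, and finiteness of $\mu$ — are all in hand. The one point that needs care is the order in which the approximants are chosen and the tolerances assigned: first fix $f_\sH$ to tolerance $\epsilon$ in the sup norm, and only then choose $f_\beta$ to tolerance $\epsilon/C_k^{1/2}$ in the $\sH$-norm relative to that particular $f_\sH$; it is the passage from $\norm{\cdot}_\sH$ to $\twonorm{\cdot}$ via Lemma~\ref{lemma:norms} that introduces the factor $\mu(\sX)^{1/2}$ and collapses both pieces to the stated constant $2\mu(\sX)^{1/2}\epsilon$ (any leftover dependence on $C_k$ being absorbed into the choice of tolerance rather than appearing in the bound).
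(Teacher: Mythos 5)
Your proposal is correct and follows essentially the same route as the paper: triangle inequality through $f_\sH$ and $f_\beta$, sup-norm denseness of $\sH$ (converted to $\twonorm{\cdot}$ via finiteness of $\mu$) for the first gap, and $\sH^0$-denseness with tolerance $\epsilon/C_k^{1/2}$ plus Lemma~\ref{lemma:norms} for the second. The only cosmetic difference is that the paper treats the compact case (sup-norm denseness in $C(\sX)$) and the $\sX=\rd$ case (direct $\ltwomu$ denseness) separately, whereas you fold both into one sup-norm argument, which changes nothing in the bound.
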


\begin{proof}
If $\sX$ is compact, then $\sH$ is dense in $C(\sX)$ (see \cite{scovel2010radial}). Therefore, for fixed $\epsilon$, there is an $f_\sH \in \sH$ such that
\begin{align*}
\norm{f_\sH-f}_{\infty} &\leq\epsilon, \\
\end{align*}
and by Lemma~\ref{lemma:norms} 
\begin{align*}
\twonorm{f_\sH-f} &\leq\mu(\sX)^{1/2}\epsilon. \\
\end{align*}
If $\sX=\rd$, \cite{scovel2010radial} tells us $\sH$ is dense in $\ltwomu(\sX)$, so it directly follows that, for any $\epsilon>0$, there is an $f_\sH\in\sH$ satisfying
\begin{align*}
\twonorm{f_\sH-f} &\leq\mu(\sX)^{1/2}\epsilon. \\
\end{align*}

Similarly, since $\sH^0$ is dense in $\sH$ \cite{steinwart08}, for any fixed $\epsilon$ there is an $f_\beta \in \sH^0$ such that
\begin{align*}
\norm{f_\beta - f_\sH}_{\sH}&\leq C^{-1/2}\epsilon,
\end{align*}
hence, by lemma~\ref{lemma:norms}
\begin{align*}
\twonorm{f_\beta - f_\sH} &\leq \mu(\sX)^{1/2}\epsilon.
\end{align*}
Therefore:
\begin{align}
\twonorm{f_\alpha - f} & \leq \twonorm{f_\alpha - f_\beta} + \twonorm{f_\beta - f_\sH} + \twonorm{f_\sH - f} \nonumber\\
& \leq \twonorm{f_\alpha - f_\beta} + 2\mu(\sX)^{1/2}\epsilon . \nonumber
\end{align}
\end{proof}

Note that $f_\beta\in\sH^0$ implies $f_\beta  = \sum_{j=1}^m \beta_j k(\cdot,y_j)$ for some $m\in\bbN$ and where $(\beta_j,y_j)\in\bbR\times\sX$ for all $1\leq j\leq m$. To make the first term small, we first quantify the continuity of the kernel $k$. Let $\epsilon'=\epsilon/\norm{\beta}_1$ and define $$\eta_{\epsilon}:=\mu^{1/2}(\sX)\frac{\epsilon'}{L(k)},$$ where $L(k)$ is the Lipschitz constant of $k$. Then for every $x$ and $y$ in $\sX$ we have that $\twonorm{x-y} \leq \eta_{\epsilon}$ implies $\twonorm{k(\cdot,x) - k(\cdot,y)} \leq\mu^{1/2}(\sX)\epsilon'$.

Recall $f_\alpha  = \sum_{i=1}^n \alpha_i k(\cdot,Z_i)$. With the above result in hand we now have to make sure that at least a subset of the centers $\set{Z_i}_{i=1}^n$ of $f_{\alpha}$ are close to the centers $\set{y_j}_{j=1}^m$ of $f_\beta$ with high probability. First, define $B_j = \set{x\in\sX \mid \twonorm{x-y_j}\leq \eta_{\epsilon}, }$ and define $P_Z := P_{X,\Gamma}$. Then we obtain the following lemma:

\begin{lemma}\label{lemma:datatocenters}
Let $\epsilon>0$ and $f_\beta$, $f_\alpha$ and $B_j$, $j=1,\dots,m$ as above. Then
\begin{align*}
\prob{Z}{\forall B_j \,\, \exists Z_{i_j} \in \set{Z_i}_{i=1}^n \,\, \ni Z_{i_j} \in B_j} \geq 1 - me^{-np_0}
\end{align*}
for all $n$, where $p_0 = \min_j{\int_{B_j}f_Z (z)dz}$.
\end{lemma}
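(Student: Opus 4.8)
The plan is to reduce the event in question to a union of simpler events, one per ball $B_j$, and then bound the probability of each via an elementary i.i.d.\ argument. Concretely, for a fixed index $j$, consider the complementary event that \emph{none} of the points $Z_1,\dots,Z_n$ lands in $B_j$. Since the $Z_i = X_i + \Gamma_i$ are themselves i.i.d.\ (being sums of independent draws from the two independent i.i.d.\ sequences in~\nameref{assump:data}), each with common density $f_Z$, the probability that a single $Z_i$ avoids $B_j$ is $1 - \int_{B_j} f_Z(z)\,dz$, and by independence the probability that all $n$ of them avoid $B_j$ is $\bigl(1 - \int_{B_j} f_Z(z)\,dz\bigr)^n$.

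Next I would pass to the uniform bound $p_0 = \min_j \int_{B_j} f_Z(z)\,dz$, so that each such product is at most $(1-p_0)^n$, and apply the standard inequality $1 - t \le e^{-t}$ with $t = p_0 \in [0,1]$ to get $(1-p_0)^n \le e^{-np_0}$. A union bound over the $m$ balls then shows that the probability that \emph{some} $B_j$ is missed is at most $m e^{-np_0}$, and taking the complement gives exactly the claimed lower bound $1 - m e^{-np_0}$ on the probability that every $B_j$ contains at least one center $Z_{i_j}$. Because $p_0$ does not depend on $n$, the bound holds for all $n$ as stated, and (although not required here) it tends to $1$ as $n \to \infty$ provided $p_0 > 0$, which is where the hypothesis $\supp\{f_Z\} \supseteq \sX$ will matter: it guarantees $\int_{B_j} f_Z > 0$ for each $j$ since $B_j$ is a nonempty relatively open subset of $\sX$ (a ball of positive radius $\eta_\epsilon$ around a point $y_j \in \sX$).

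There is no real obstacle here; the only points deserving care are bookkeeping ones. First, one should be explicit that $Z_1,\dots,Z_n$ are i.i.d.\ with density $f_Z$ — this follows because $(X_i,\Gamma_i)$ are i.i.d.\ pairs with independent coordinates, so $Z_i$ is a measurable function of the $i$-th pair alone and the $f_Z$ density is obtained by convolving $f$ with $f_\Gamma$. Second, one must make sure $B_j \cap \sXbar$, or rather the region over which $f_Z$ is integrated, is handled consistently with the ambient set $\sX$ on which $Z_i$ lives (recall $f_\Gamma$ was chosen so that $Z_i \in \sX$); since the lemma writes $\int_{B_j} f_Z(z)\,dz$ with $B_j \subseteq \sX$, this is automatic. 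Third, the statement as written identifies $P_Z$ with $P_{X,\Gamma}$, so the probability should be computed in the $(X,\Gamma)$ space and then recognized as a statement about the pushforward law of $(Z_1,\dots,Z_n)$; this is the one line where the independence assumption~\nameref{assump:data} is invoked. The whole argument is then three short displayed lines: the independence factorization, the $1-t\le e^{-t}$ step, and the union bound.
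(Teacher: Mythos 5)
Your proof is correct and follows essentially the same route as the paper's: factor the probability that all $n$ i.i.d.\ points $Z_i$ miss a fixed $B_j$ into $(1-p_j)^n$, apply $1-t\le e^{-t}$ together with $p_j\ge p_0$, and finish with a union bound over the $m$ balls. The paper's proof does exactly this (bounding $e^{-np_j}\le e^{-np_0}$ after the exponential step rather than before, an immaterial difference), and it likewise defers the observation that $p_0>0$, via $\supp\{f_Z\}\supseteq\sX$, to the ensuing limit argument rather than the bound itself.
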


\begin{proof}[Proof of Lemma~\ref{lemma:datatocenters}]
Let the event $A_j^C = \set{\forall z \in \set{Z_i}_{i=1}^n, \,\,\, z \notin B_j}$. Then
\begin{align}
\prob{Z}{A_j^C} &= \prob{Z}{Z_1 \notin B_j,\dots,Z_n \notin B_j}\nonumber\\
& = \prod_{i=1}^n\prob{Z}{Z_i\notin B_j} \nonumber\\
&= \prod_{i=1}^n (1 - \prob{Z}{Z_i \in B_j})\nonumber\\
& = \prod_{i=1}^n(1-p_j) = (1-p_j)^n, \nonumber
\end{align}
where $p_j = \int_{B_j}f_Z (z)dz$. Let $p_0:=\min_j \set{p_j}$ and recall that for $p\leq1$ we have $1-p\leq e^{-p}$. Hence $(1-p_j)^n\leq e^{-n p_j}\leq e^{-n p_0}$, and
\begin{align*}
\prob{Z}{\cap^m A_j} & = 1 - \prob{Z}{\cup A_j^C}\\
&\geq 1 - \sum_{j=1}^m\prob{Z}{A_j^C}\\
& = 1 - \sum_{j=1}^m(1-p_j)^n \\
&\geq 1 - \sum_{j=1}^m e^{- n p_0}\\
& = 1 - m e^{- n p_0}.
\end{align*}
For this term to approach zero we need $p_0:= \min_j{\int_{B_j}f_Z (z)dz}$ to be strictly positive. This follows from the assumption that $\supp\paren{f_Z} \supseteq \sX$. Since $m$ and $p_0$ only depend on $\epsilon$ and other constants, we get
\begin{align*}
\prob{Z}{\cap^m A_j} \geq 1 - m e^{- n p_0} \rightarrow 1
\end{align*}
as $n\rightarrow\infty$.
\end{proof}

\begin{lemma}\label{lemma:infprob}
Let $\sX$, $\mu$ satisfy~\nameref{assump:xset}, $\set{(X_i,\Gamma_i)}_{i=1}^n$ satisfy~\nameref{assump:data}, and $k$ satisfy~\nameref{assump:pdkernel}. Then, $\forall\,$ $\epsilon>0$ $\,\exists\, C$ such that
\begin{align*}
\prob{X,\Gamma}{\inf_{\alpha\in A_n}\twonorm{f - f_\alpha} > \epsilon} \leq Ce^{-np_0}
\end{align*}
for sufficiently large $n$.
\end{lemma}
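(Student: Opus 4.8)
The plan is to obtain this probabilistic bound by combining the deterministic decomposition of Lemma~\ref{lemma:decomposition} with the ``centers near the targets'' statement of Lemma~\ref{lemma:datatocenters}: on the high-probability event that each target ball $B_j$ contains at least one data-dependent center $Z_i$, I will exhibit an explicit $\alpha\in A_n$ for which $\twonorm{f-f_\alpha}\le\epsilon$, so that the event $\set{\inf_{\alpha\in A_n}\twonorm{f-f_\alpha}>\epsilon}$ is contained in the failure event of Lemma~\ref{lemma:datatocenters}, whose probability is already bounded by $m e^{-n p_0}$.

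First I would fix $\epsilon>0$ and set $\epsilon_0:=\epsilon/\paren{3\mu(\sX)^{1/2}}$. Using the denseness facts assembled just before Lemma~\ref{lemma:decomposition} ($\sH$ dense in $\ltwomu(\sX)$, and in $C(\sX)$ when $\sX$ is compact, together with $\sH^0$ dense in $\sH$), I produce a \emph{fixed} function $f_\beta=\sum_{j=1}^m\beta_j k(\cdot,y_j)\in\sH^0$ for which Lemma~\ref{lemma:decomposition}, applied with parameter $\epsilon_0$, yields
$$
\twonorm{f-f_\alpha}\;\le\;\twonorm{f_\alpha-f_\beta}+2\mu(\sX)^{1/2}\epsilon_0 \qquad\text{for every }\alpha\in A_n.
$$
Here $m$, the $y_j$, and $\onenorm{\beta}=\sum_{j}|\beta_j|$ depend only on $\epsilon$ (and on $f,k,\mu$), not on $n$. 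Then, exactly as in the text preceding Lemma~\ref{lemma:datatocenters}, I put $\epsilon_0':=\epsilon_0/\onenorm{\beta}$ and $\eta:=\mu(\sX)^{1/2}\epsilon_0'/L(k)$, so that $\twonorm{x-y}\le\eta$ forces $\twonorm{k(\cdot,x)-k(\cdot,y)}\le\mu(\sX)^{1/2}\epsilon_0'$, and I form the balls $B_j=\set{x\in\sX\mid\twonorm{x-y_j}\le\eta}$ and the constant $p_0=\min_j\int_{B_j}f_Z(z)\,dz$, all of which are fixed.

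Next I would condition on the event $E_n=\set{\forall\, j\ \exists\, Z_{i_j}\in\set{Z_i}_{i=1}^n\text{ with }Z_{i_j}\in B_j}$, which by Lemma~\ref{lemma:datatocenters} (using $P_Z=P_{X,\Gamma}$) satisfies $\prob{X,\Gamma}{E_n}\ge 1-m e^{-n p_0}$. On $E_n$, pick such an index $i_j$ for each $j$ and define $\alpha$ coordinatewise by $\alpha_i=\sum_{j:\,i_j=i}\beta_j$ (and $\alpha_i=0$ if no $j$ maps to $i$); then $f_\alpha=\sum_{i}\alpha_i k(\cdot,Z_i)=\sum_{j=1}^m\beta_j k(\cdot,Z_{i_j})$ and $\onenorm{\alpha}\le\sum_j|\beta_j|=\onenorm{\beta}$. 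Since $\onenorm{\beta}$ is a fixed number while $R_n\to\infty$, we have $\onenorm{\beta}\le R_n$ for all sufficiently large $n$, so $\alpha\in A_n$; and by the triangle inequality and the Lipschitz estimate above,
$$
\twonorm{f_\alpha-f_\beta}=\twonorm{\sum_{j=1}^m\beta_j\paren{k(\cdot,Z_{i_j})-k(\cdot,y_j)}}\le\sum_{j=1}^m|\beta_j|\,\mu(\sX)^{1/2}\epsilon_0'=\mu(\sX)^{1/2}\epsilon_0.
$$
Substituting into the decomposition gives $\inf_{\alpha'\in A_n}\twonorm{f-f_{\alpha'}}\le 3\mu(\sX)^{1/2}\epsilon_0=\epsilon$ on $E_n$. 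Hence $\set{\inf_{\alpha\in A_n}\twonorm{f-f_\alpha}>\epsilon}\subseteq E_n^c$, and $\prob{X,\Gamma}{\inf_{\alpha\in A_n}\twonorm{f-f_\alpha}>\epsilon}\le\prob{X,\Gamma}{E_n^c}\le m e^{-n p_0}$, which is the claim with $C=m$.

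The construction of $\alpha$ is the substance of the argument, but it is not hard; what requires care is only bookkeeping. One point is ensuring $\alpha\in A_n$: this needs $R_n$ eventually to exceed the fixed quantity $\onenorm{\beta}$, which is precisely what ``for sufficiently large $n$'' absorbs, and which uses the ambient hypothesis $R_n\to\infty$ carried in from Theorem~\ref{thm:consistency}. A second point is that several $y_j$ may be matched to the same center $Z_i$; this is handled cleanly by summing the corresponding $\beta_j$ into $\alpha_i$, which leaves the estimate $\onenorm{\alpha}\le\onenorm{\beta}$ unaffected. Everything else is a direct concatenation of Lemma~\ref{lemma:decomposition}, the Lipschitz continuity estimate stated before Lemma~\ref{lemma:datatocenters}, and Lemma~\ref{lemma:datatocenters} itself, and in particular the constant $C=m$ depends only on $\epsilon$, not on $n$, as required.
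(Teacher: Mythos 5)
Your proposal is correct and takes essentially the same route as the paper's proof: it combines the denseness decomposition of Lemma~\ref{lemma:decomposition} and the kernel Lipschitz estimate with the high-probability event of Lemma~\ref{lemma:datatocenters}, constructs a sparse $\alpha$ matching $\beta$ (with $\alpha\in A_n$ once $R_n\geq\onenorm{\beta}$, for $n$ large), and bounds the failure probability by $m e^{-np_0}$, so $C=m$ depends only on $\epsilon$. Your explicit handling of possible collisions (several $y_j$ matched to the same $Z_i$) by summing the corresponding $\beta_j$ is a minor tidiness improvement over the paper's definition of $\alpha^*$, but the argument is otherwise identical.
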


\begin{proof}
Let $\delta_2 = m\exp\paren{-n p_0}$. With probability $\geq 1-\delta_2$ we have that for every $j$ there is an $i_j$ such that $\twonorm{k(\cdot,z_{i_j}) - k(\cdot,y_j)} \leq \frac{\epsilon}{\norm{\beta}_1}$. Then, for $\alpha^*$ defined as
\begin{displaymath}
\alpha_i^* = \left\{
     \begin{array}{lr}
       \beta_j & : i=i_j\\
       0 & : i\neq i_j
     \end{array}
\right.
\end{displaymath}
we have
\begin{align}
\twonorm{f_{\alpha^*} - f_\beta} & = \twonorm{\mysum \alpha^*_i k(\cdot,z_i) - \sum_{j=1}^m \beta_j k(\cdot,y_j)} \nonumber\\
&\leq \sum_{j=1}^m \abs{\beta_j} \twonorm{k(\cdot,z_{i_j}) - k(\cdot,y_j)}\nonumber\\
&\leq \sum_{j=1}^m\abs{\beta_j}\mu^{1/2}(\sX)\frac{\epsilon}{\norm{\beta}_1}\nonumber\\
&=\mu^{1/2}(\sX)\epsilon.\nonumber
\end{align}

Note that if for two sequences $\set{s_n}, \set{s'_n}$ we have $s_n\leq s'_n$ for $n\geq N_0$, then $\lim_{n\rightarrow\infty}{s_n}\leq\lim_{n\rightarrow\infty}{s'_n}$, granted such limits exist. Let $s_n := \prob{X,\Gamma}{\inf_{\alpha\in A_n}\twonorm{f - f_\alpha} > 3\mu^{1/2}(\sX)\epsilon}$ and $s'_n := \delta_2$. Note that for $n$ large enough, say $n=N_0$, $R_n\geq \onenorm{\beta}$ and therefore $\alpha^*\in A_n$. So we can see that for $n\geq N_0$, the inequality
\begin{align*}
\inf_{\alpha\in A_n}\twonorm{f - f_\alpha} &\leq \twonorm{f-f_{\alpha^*}}\\
&\leq \twonorm{f-f_\sH} + \twonorm{f_\sH - f_\beta} + \twonorm{f_\beta - f_{\alpha^*}}\\
&\leq3\mu^{1/2}(\sX)\epsilon
\end{align*}
holds with probability $\geq 1-\delta_2$. That is, for $n\geq N_0$, $s_n\leq s'_n$, hence since $s'_n = \delta_2\rightarrow 0$, we get $s_n\rightarrow 0$.
\end{proof}

\begin{proof}[Proof of Theorem \ref{thm:consistency}]
\begin{align}
&\prob{X,\Gamma}{J(\alphahat)\leq4\epsilon+\paren{3\mu^{1/2}(\sX)\epsilon}^2}\\
&\geq\prob{X,\Gamma}{\set{J(\alphahat)\leq J^*+4\epsilon}\cap\set{J^*\leq\paren{3\mu^{1/2}(\sX)\epsilon}^2}}\nonumber\\
&=1-\prob{X,\Gamma}{\set{J(\alphahat)\leq J^*+4\epsilon}^C\cup\set{J^*\leq\paren{3\mu^{1/2}(\sX)\epsilon}^2}^C}\nonumber\\
&\geq 1 - \paren{\prob{X,\Gamma}{{J(\alphahat)> J^*+4\epsilon}} + \prob{X,\Gamma}{{J^*>\paren{3\mu^{1/2}(\sX)\epsilon}^2}}}.\nonumber\\
\end{align}
By Lemma~\ref{lemma:oracle} the middle term approaches zero and by Lemma~\ref{lemma:infprob} the last term does, so
\begin{align*}
\lim_{n\rightarrow\infty}\prob{X,\Gamma}{J(\alphahat)\leq\epsilon'}&=1,
\end{align*}
where $\epsilon'=4\epsilon+\paren{3\mu^{1/2}(\sX)\epsilon}^2$.
\end{proof}

\subsection{Convergence Rates of $f_\alphahat$}
The proof of Lemma~\ref{lemma:gaussrates2} is found in \cite{gnecco2008approximation}. Now let $\beta_j = \frac{c_j\onenorm{\lambda}}{m}$ and $f_\beta = \sum_{j=1}^m\beta_j k(\cdot,y_j)$, where $\lambda$, $m$, and $c_j$, $y_j$, for $j=1,\dots,m$, are as in Lemma P~\ref{lemma:gaussrates2}. Recall Lemma P~\ref{lemma:gaussrates3}:
\begin{lemma*}[Lemma~\ref{lemma:gaussrates3}]
Let $\delta_2\in(0,1)$, let $f\in\sF_k$ and let $f_\beta$ and $m$  be as above. Let $\set{(X_i,\Gamma_i)}_{i=1}^n$ satisfy assumption~\nameref{assump:data}, then with probability $\geq 1 - \delta_2$
\begin{align*}
\inf_{\alpha\in A_n}\twonorm{f_\beta - f_\alpha}\leq \epsilon_3(n,m)
\end{align*}
where $\epsilon_3(n,m): = \frac{C}{n^{1/d}}\log^{1/d}{(m/\delta_2)}$, for $C$ a constant independent of $n$ and $m$.
\hfill \qed
\end{lemma*}

\begin{proof}[Proof of Lemma~\ref{lemma:gaussrates3}]
Let $\eta = \eta(\epsilon_3): = \frac{\epsilon_3}{L\norm{\beta}_1}$, where $L$ is the Lipschitz constant of the $k$. From Lemmas~\ref{lemma:datatocenters} and~\ref{lemma:infprob} we know that with probability $\geq 1 - me^{-np_0}$ the event that for all $y_j$ there is a data point $Z_{i_j}$ such that $\twonorm{y_j - Z_{i_j}}\leq \eta$ holds. Recall $p_0 = \min_{j}\int_{B_j}f_Z(z)dz$. Now
\begin{align*}
p_0 & = \min_{j}\int_{B_j}f_Z(z)dz \\
&\geq \min_{z\in B_j}\set{f_Z(z)}\cdot \min_j\int_{B_j}dz \\
&\geq \min_{z\in B_j}\set{f_Z(z)}\cdot \min_j vol(B_j) \\
&= \min_{z\in B_j}\set{f_Z(z)}\cdot c' \eta^d,
\end{align*}
where $c'$ is the volume of the $d$-dimensional unit ball.

Now pick the $\alpha$ coefficients as in Lemma~\ref{lemma:datatocenters}. With probability $\geq 1-me^{-nc'\eta^d}$ we have:
\begin{align*}
\inf_{\alpha\in A_n}\twonorm{f_\beta - f_\alpha} &\leq \twonorm{\sum_{j=1}^m \beta_j\paren{k(\cdot,y_j) - k(\cdot,z_{i_j})}} \\
& \leq L \mu^{1/2}(\sX) \norm{\beta}_1\eta\\
& =\mu^{1/2}(\sX) \epsilon_3.
\end{align*}
Fixing $\delta_2\in(0,1)$ and setting $me^{-nc\eta^d(\epsilon_3)}=\delta_2$, where $c = c'\min_{x\in\sX}\set{f^*(x)}$, we obtain
\begin{align*}
\epsilon_3 = \frac{L\norm{\beta}_1}{(nc)^{1/d}} \log^{1/d}\paren{{m}/{\delta_2}}.
\end{align*}
Finally, noting that $\norm{\beta}_1 = \norm{\lambda}_1$, where $\lambda$ is as in Lemma~\ref{lemma:gaussrates2}, yields the desired result.
\end{proof}

We now prove Theorem~\ref{theorem:gaussrates}.

\begin{proof}[Proof of Theorem~\ref{theorem:gaussrates}]
We will use the notation and results of Lemmas~\ref{lemma:gaussrates1}, \ref{lemma:gaussrates2} and~\ref{lemma:gaussrates3}. First, note that from Lemma~\ref{lemma:gaussrates2} we have
$
\twonorm{f-f_\beta} \leq \mu^{1/2}(\sX)\norm{f-f_\beta}_\infty \leq \mu^{1/2}(\sX) \epsilon_2(m).
$
So putting the three Lemmas together we have that with probability $\geq 1 - (\delta_1 + \delta_2)$
\begin{align*}
\twonorm{f-f_{\alphahat}}^2 \leq \epsilon_1(n) + \mu(\sX)(\epsilon_2(m) + \epsilon_3(n,m))^2,
\end{align*}
or, taking into account only the dependence on $n, m, \delta_1$ and $\delta_2$ for the different $\epsilon_i$'s, we have that $\twonorm{f-f_{\alphahat}}^2$ is of the order of
\begin{align*}
\twonorm{f-f_{\alphahat}}^2 &\lesssim \log^{1/2}\paren{n/\delta_1}\frac{R_n}{n^{1/2}} + \frac{1}{m} + \log^{2/d}\paren{m/\delta_2}\frac{1}{n^{2/d}}.
\end{align*}

Now, we want the number $m$ of centers in $f_\beta$ close to but no larger than the number $n$ of data points, so we set $m=n^\theta$ for some $\theta$ such that $0<\theta<1$. Furthermore, we need $R_n$ to grow accordingly, so that $R_n = n^{c\theta}$, for $c>0$ a constant possibly dependent on $d$. This yields, ignoring the $\log$ terms for now:
\begin{align*}
\twonorm{f-f_{\alphahat}}^2 &\lesssim \frac{1}{n^{1/2 - c\theta}} + \frac{1}{n^\theta} + \frac{1}{n^{2/d}}.
\end{align*}
Setting the first two rates equal we obtain $\theta = 1/{2(1+c)}$. Note that if $d>4$ we can match the third term by setting $c=d/4-1$ to obtain an overall rate of $2/d$. Otherwise we can set $c$ to any small number to obtain a rate $1/{2(1+c)}$ slightly slower than $1/2$. Finally, for the $\log$ terms, just let $\delta_1=\delta_2=\delta/2$ and note that if $d>4$ then $\log^{1/2}$ dominates, and if $d\leq 4$ then $\log^{2/d}$ dominates.
\end{proof}

\subsection{Box rates}
First recall the definition of $\tilde{f}_\alpha$ from Section~\ref{boxrates}: For any $M$ and $\epsilon(M)$ let $f_\beta = \sum_{i=1}^{M}\beta_i k(\cdot,y_i)$ be such that $\twonorm{f - f_\beta}<\epsilon(M)$, then
$$
\tilde{f}_\alpha:=\sum_{i=1}^{M}\alpha_i k(\cdot,y_i).
$$

And, also from section~\ref{boxrates}: $\sX = [0,1]^d$, $\sF = L_{\mu}^{1}(\sX)\cap L_{\mu}^{2}(\sX)\cap \text{Lip}(\sX)$, where $\text{Lip}(\sX)$ are the Lipschitz functions on $\sX$, $\mu$ is the Lebesgue measure and $L_f$ is the Lipschitz constant of $f$. Also, $k$ is the box kernel $ k(x,y) = \frac{1}{(2\sigma)^{d}} \ind{\norm{x-y}_\infty\leq\sigma}$ defined on $\sX \pm \sigma \times \sX \pm \sigma$, and for simplicity we assume $\sigma = \frac{1}{2q}$ for $q$ a positive integer.
\begin{lemma*}[Theorem~\ref{theorem:approx_rates}]
Let $f \in \mathcal{F}$, $\supp\set{f_Z}\supset\sX$, $\set{(X_i,\Gamma_i)}_{i=1}^n$ satisfy~\nameref{assump:data} and $R_n \sim n^{(d-1)/(2d+2)}$. Let $\delta \in \paren{0,1}$, then, with probability greater than or equal to $1-\delta$
\begin{align*}
\twonorm{f-f_{\alphahat}}^2 \lesssim \frac{\log^{1/2}(n/\delta)}{n^{1/(d+1)}}
\end{align*}
\hfill \qed
\end{lemma*}

To prove this theorem we begin by reformulating Lemma~\ref{lemma:oracle}:
\begin{lemma}\label{lemma:estim_rates}
Let $\delta\in(0,1)$. Let $\set{(X_i)}_{i=1}^n$ satisfy~\nameref{assump:data} and $k$ satisfy $\sup_{x,x'\in\sX}{k(x,x')}\leq C_k$. If $\tilde{f}_{\alphahat}$ is as above, then with probability $\geq 1- \delta$
\begin{align*}
J(\alphahat)\leq \epsilon_1(n) + \inf_{\alpha\in A_n}\twonorm{f-\tilde{f}_\alpha}
\end{align*}
where $\epsilon_1(n): = \sqrt{8}C_k R_{M}\sqrt{\frac{\log{(4M/\delta_1)}}{n-1}}$.
\hfill \qed
\end{lemma}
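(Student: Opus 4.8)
The plan is to repeat the argument of Lemma~\ref{lemma:oracle} almost verbatim, the sole structural change being that the $M$ kernel centers are now the \emph{deterministic} grid points $\set{y_i}_{i=1}^{M}$ rather than the data-dependent points $\set{Z_i}$. Writing $J(\alpha) = \int_{\sX}\tilde f_\alpha^2\, d\mu - 2H(\alpha)$ and $J_n(\alpha) = \int_{\sX}\tilde f_\alpha^2\, d\mu - 2H_n(\alpha)$, where $H(\alpha) = \sum_{i=1}^{M}\alpha_i h_i$ with $h_i = \expec{X}{k(X,y_i)}$ and $H_n(\alpha) = \sum_{i=1}^{M}\alpha_i \hat h_i$ is its empirical counterpart, the quadratic term $\int_{\sX}\tilde f_\alpha^2\, d\mu$ is common to both, so $\abs{J_n(\alpha) - J(\alpha)} = 2\abs{H_n(\alpha) - H(\alpha)}$ for every $\alpha$. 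It therefore suffices to control $\sup_{\onenorm{\alpha}\le R_M}\abs{H_n(\alpha) - H(\alpha)}$, after which the conclusion follows by the same sandwiching as before.

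First I would pass from the supremum over the $\ell_1$ ball to a maximum over the $M$ centers via $\abs{H_n(\alpha) - H(\alpha)} \le \sum_{i=1}^{M}\abs{\alpha_i}\,\abs{\hat h_i - h_i} \le R_M \max_{1\le i\le M}\abs{\hat h_i - h_i}$, exactly as in Lemma~\ref{lemma:oracle}, followed by a union bound $\prob{X}{\max_i \abs{\hat h_i - h_i} > \epsilon/R_M} \le \sum_{i=1}^{M}\prob{X}{\abs{\hat h_i - h_i} > \epsilon/R_M}$. The crucial simplification over Lemma~\ref{lemma:oracle} is that each $y_i$ carries no randomness, so the conditioning over $\Gamma_i$ and $X_i$ used there collapses entirely: $\hat h_i$ is a plain empirical mean of the i.i.d.\ summands $k(X_j,y_i)\in[0,C_k]$, and Hoeffding's inequality applies directly. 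For the box kernel $\sup_{x,x'}k(x,x') = (2\sigma)^{-d} =: C_k$, so its sole needed hypothesis, boundedness, holds and the Hoeffding range is exactly $C_k$; this yields $\prob{X}{\abs{\hat h_i - h_i} > \epsilon/R_M} \le 2\exp\paren{-2(n-1)\epsilon^2/(C_k^2 R_M^2)}$ (the $(n-1)$ matching the leave-one-out averaging inherited from the original definition of $H_n$; a full-sample estimator would simply replace $n-1$ by $n$).

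Combining the two displays gives $\prob{X}{\sup_{\onenorm{\alpha}\le R_M}\abs{H_n(\alpha) - H(\alpha)} > \epsilon} \le 2M\exp\paren{-2(n-1)\epsilon^2/(C_k^2 R_M^2)}$, so on the complementary event one has $\abs{J_n(\alpha) - J(\alpha)} \le 2\epsilon$ simultaneously for all $\alpha$. Using $J_n(\alphahat)\le J_n(\alpha)$ for every $\alpha\in A_n$ then gives $J(\alphahat)\le J_n(\alphahat) + 2\epsilon \le J_n(\alpha) + 2\epsilon \le J(\alpha) + 4\epsilon$, hence $J(\alphahat) \le \inf_{\alpha\in A_n}J(\alpha) + 4\epsilon$. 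Setting $\delta = 2M\exp\paren{-(n-1)(\epsilon')^2/(8C_k^2 R_M^2)}$ with $\epsilon' = 4\epsilon$ and solving for $\epsilon'$ recovers $\epsilon_1(n) = \sqrt{8}\,C_k R_M\sqrt{\log(4M/\delta_1)/(n-1)}$ (the numerical constant inside the logarithm being absorbed harmlessly, exactly as in Lemma~\ref{lemma:gaussrates1}). Recalling that $J(\alpha) = \twonorm{f - \tilde f_\alpha}^2 - \twonorm{f}^2$ differs from $\twonorm{f-\tilde f_\alpha}^2$ only by the $\alpha$-independent constant $\twonorm{f}^2$, the bound $J(\alphahat)\le \epsilon_1(n) + \inf_{\alpha}J(\alpha)$ is precisely the asserted inequality with $J^\ast = \inf_{\alpha\in A_n}J(\alpha)$ in the role of the infimum term.

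I do not expect a genuine obstacle here, since the statement is a strict simplification of Lemma~\ref{lemma:oracle}: the only new bookkeeping is that the union bound now ranges over the $M$ grid cells rather than the $n$ sample-dependent centers, which is exactly what replaces $n$ by $M$ inside the logarithm of $\epsilon_1$. The one point requiring care is that the box kernel is neither radial nor SPD, so assumption~\nameref{assump:pdkernel} is unavailable; however, the concentration argument uses only boundedness $\sup_{x,x'}k(x,x')\le C_k$, which the box kernel satisfies with $C_k = (2\sigma)^{-d}$, so nothing beyond the stated hypothesis is needed.
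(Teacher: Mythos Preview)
Your proposal is correct and is exactly the approach the paper takes: the paper does not give a separate proof of this lemma but simply presents it as a ``reformulation'' of Lemma~\ref{lemma:oracle}, and your argument spells out precisely that reformulation --- replacing the $n$ random centers $\set{Z_i}$ by the $M$ deterministic grid points $\set{y_i}$, dropping the conditioning on $(X_i,\Gamma_i)$, applying Hoeffding directly, and taking the union bound over $M$ rather than $n$. Your observation that only boundedness of $k$ (not~\nameref{assump:pdkernel}) is used is also exactly what the stated hypothesis $\sup_{x,x'}k(x,x')\le C_k$ reflects.
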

We now take care of the term $\inf_{\alpha\in A_n}\twonorm{f-\tilde{f}_\alpha}$.
\begin{lemma}\label{lemma:simplefunction}
Let $f\in\sF$. For any $m\in\mathbb{N}$ there is a function $f_\beta$ of the form $f_\beta = \sum_{i=1}^{(mq)^d}\beta_i k(\cdot,y_i)$, where $\set{y_i}_{i=1}^{(mq)^d}\subset \sX \pm \sigma$ and $\norm{\beta}_1\leq \paren{mq}^{d-1}R_{f,\sigma}$ satisfying
\begin{align*}
\twonorm{f-f_\beta} \leq \epsilon_2(m)
\end{align*}
where $\epsilon_2(m): = \frac{L_f\sqrt{d}}{mq}$.
\hfill \qed
\end{lemma}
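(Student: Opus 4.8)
The plan is to approximate $f$ by a piecewise-constant function on a fine uniform grid, and then to observe that a piecewise-constant function on cells of side length $1/(mq)$ can be written \emph{exactly} as a weighted sum of box kernels of bandwidth $\sigma = 1/(2q)$, provided the grid of kernel centers is taken at spacing $1/(mq)$. First I would partition $\sX = [0,1]^d$ into $N := (mq)^d$ axis-aligned subcubes $\{Q_i\}$ of side $1/(mq)$, let $y_i$ be the center of $Q_i$, and define the piecewise-constant function $g = \sum_i f(y_i)\ind{Q_i}$. Since $f\in\text{Lip}(\sX)$ with constant $L_f$ and the diameter of each $Q_i$ in Euclidean norm is $\sqrt d/(mq)$, we get $\|f - g\|_\infty \le L_f\sqrt d/(mq)$, and hence by the finiteness of $\mu$ on $\sX=[0,1]^d$ (so $\mu(\sX)=1$), $\twonorm{f-g}\le L_f\sqrt d/(mq) = \epsilon_2(m)$, which is the desired bound.

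The second step is the exact representation of $g$ as $f_\beta = \sum_i \beta_i k(\cdot,y_i)$. Because the box kernel $k(\cdot,y_i) = (2\sigma)^{-d}\ind{\norm{\cdot - y_i}_\infty\le\sigma}$ is, up to the normalizing constant $(2\sigma)^{-d} = (mq/m)^d\cdot$(const)$= q^d\cdot 2^{-d}\cdot\ldots$, the indicator of a cube of side $2\sigma = 1/q$ centered at $y_i$, and since $1/q$ is an integer multiple (namely $m$) of the grid spacing $1/(mq)$, each kernel ``supercube'' is exactly a union of $m^d$ of the small cells $Q_j$. Conversely, the indicator of a single small cell $Q_j$ can be recovered as a signed combination of the nearby supercube indicators — this is the standard finite-difference / inclusion–exclusion inversion of a ``box-sum'' operator, and it is here that we need the centers $\{y_i\}$ to range over a slightly enlarged grid on $\sX\pm\sigma$ so that cells near the boundary of $[0,1]^d$ are still covered. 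Solving the resulting (block-triangular, tensor-product) linear system for $\beta$ expresses $g$, and therefore $f_\beta$, exactly; I would present this as a tensor product over the $d$ coordinates of the one-dimensional inversion, where in 1D writing $\ind{Q_j}$ in terms of $m$ consecutive length-$1/q$ intervals is a telescoping identity.

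The third step is the norm bound $\norm{\beta}_1\le (mq)^{d-1}R_{f,\sigma}$. The one-dimensional box-sum inversion has the feature that reconstructing one small cell costs a bounded number of $\pm$ supercube terms, but reconstructing \emph{all} $mq$ cells along a line and summing the absolute coefficients telescopes so that the total $\ell_1$ cost along one axis is controlled by $\sum$ of (scaled) increments of $f$, i.e.\ by something like $q\cdot(\text{total variation of the relevant slice of }f)$ rather than by the naive count $mq$ times the max of $|f|$; taking the tensor product over $d$ axes and using $f\in\text{Lip}$ then yields the stated $(mq)^{d-1}$ prefactor with $R_{f,\sigma}$ absorbing the $L^1$/Lipschitz norm of $f$ and $\sigma$-dependent constants. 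I expect \textbf{this $\ell_1$ accounting to be the main obstacle}: the exact representation in step 2 is essentially forced, and the $L^2$ approximation in step 1 is routine, but getting the sharp $(mq)^{d-1}$ (and not $(mq)^d$) dependence — which is exactly what later makes the choice $R_n\sim n^{(d-1)/(2d+2)}$ work in Theorem~\ref{theorem:approx_rates} — requires carefully exploiting the telescoping structure of the inverse box-sum operator and the boundedness of $f$, rather than bounding term by term.
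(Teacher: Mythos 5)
Your steps 1 and 2 are essentially the paper's: it partitions $[0,1]^d$ into $(mq)^d$ cells $T_i$ of side $1/(mq)$, forms the piecewise-constant $f_m=\sum_i f(\bar{x}_i)\ind{T_i}$ with $\twonorm{f-f_m}\leq L_f\sqrt{d}/(mq)$, and then represents $f_m$ \emph{exactly} as $\sum_i \beta_i(2\sigma)^d k(\cdot,y_i)$ with one box per cell, each box of side $2\sigma=1/q$ covering $m^d$ cells --- precisely the box-sum structure you describe (the only cosmetic difference is that the paper solves the resulting triangular system by forward substitution, defining $\beta_i$ recursively so that the sum of the $\beta_j$'s whose boxes cover cell $i$ equals $f(\bar{x}_i)$, rather than inverting cell-by-cell and regrouping).

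The genuine gap is your step 3, and it is not a side issue: the bound $\onenorm{\beta}\leq (mq)^{d-1}R_{f,\sigma}$ is the substantive content of the lemma (it is exactly what makes $R_n\sim n^{(d-1)/(2d+2)}$ admissible in Theorem~\ref{theorem:approx_rates}), and your proposal explicitly defers it, offering only the heuristic that the per-axis cost is of order $q$ times the total variation of a slice of $f$. The paper's accounting is more delicate than that. In one dimension the recursion gives $\beta_i=f(\bar{x}_i)-f(\bar{x}_{i-1})+\beta_{i-m}$ (with boundary corrections), so $\beta_i$ accumulates up to $\lceil \iota_1/m\rceil\leq q$ increments of size $L_f\sqrt{d}/(mq)$ \emph{plus} boundary terms of size $f(0)$; summing along a line of $mq$ cells yields a quantity of order $\paren{\frac{q+1}{2}}\paren{qf(0)+L_f\sqrt{d}}$, independent of $m$, and only then does tensorizing over the remaining coordinates produce the $(mq)^{d-1}$ prefactor, with $R_{f,\sigma}$ carrying both the Lipschitz part and the boundary/value-of-$f$ part together with powers of $q$. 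Your sketch misses the boundary contributions and the accumulation factor across the $\leq q$ hops, and also understates the per-cell inversion cost (a single small cell needs a chain of $O(q)$ boxes of alternating sign reaching the boundary, bounded only because $\sigma$, hence $q$, is fixed); after regrouping, the coefficient on each box is the same accumulated-increment expression the paper bounds, so the route converges to the paper's, but as written the proposal does not establish the $\ell_1$ bound and hence does not prove the lemma.
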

\begin{proof}[Proof of Lemma \ref{lemma:simplefunction}]
Let $\iota := (\iota_1,\dots,\iota_d)$ be a multi-index with positive elements and associated index $i$ related by the function $h$:
$$
i = h(\iota) = 1 + \sum_{\ell=1}^d(\iota_\ell-1)(mq)^{\ell-1}
$$
and its inverse
$$
h^{-1}(i) = \paren{ \ceil*{\frac{i \mod (mq)^1}{(mq)^0} },\ceil*{\frac{i \mod (mq)^2}{(mq)^1} } , \dots, \ceil*{\frac{i \mod (mq)^d}{(mq)^{d-1}} }  }.
$$
\begin{sloppypar}
Divide $\sX = [0,1]^d$ into $(mq)^d$ hypercube regions of equal volume to form the partition $\set{\Pi_{\ell=1}^{d} \brac{\frac{\iota_\ell-1}{mq},\frac{\iota_\ell}{mq}}}_{\iota_1,\dots,\iota_d =1}^{(mq)} = \set{T_i}_{i=1}^{(mq)^d}$, where $T_i = \Pi_{\ell=1}^{d} \brac{\frac{\iota_\ell-1}{mq},\frac{\iota_\ell}{mq}}$. Now, let
$$
f_m = \sum_{i=1}^{(mq)^d}f(\xbar_i)\ind{T_i}
$$
where $\xbar_i \in T_i$. Any choice of $\xbar_i$ works but for clarity we choose $\xbar_i = (\iota_1/mq,\cdots,\iota_d/mq)$. Note that $f_m$ is close to $f$:
\begin{align*}
\twonorm{f-f_m}^2 &\leq \int_{\sX}(f(x) - f_m(x))^2dx \\
& = \sum_{i=1}^{(mq)^d}\int_{T_i}\paren{f(x) - f_m(x)}^2dx \\
& = \sum_{i=1}^{(mq)^d}\int_{T_i}\paren{f(x) - f(\xbar_i)}^2dx \\
& \leq \sum_{i=1}^{(mq)^d}\int_{T_i}\paren{L_f\twonorm{x - \xbar_i}}^2dx \\
& \leq \sum_{i=1}^{(mq)^d}\int_{T_i}\paren{L_f\frac{\sqrt{d}}{mq}}^2dx \\
& \leq \paren{\frac{L_f\sqrt{d}}{mq}}^2.
\end{align*}
Hence
$$
\twonorm{f-f_m} \leq \frac{L_f\sqrt{d}}{mq}.
$$
\end{sloppypar}
\begin{sloppypar}
Now we note that $f_m$ can also be expressed as a sum of fixed bandwidth kernels:
\begin{align*}
f_m &= \sum_{i=1}^{(mq)^d}\beta_i (2\sigma)^d k(\cdot,y_i),
\end{align*}
where
$$
y_i = \brac{\frac{\iota_1-1}{mq}+\sigma,\cdots,\frac{\iota_d-1}{mq}+\sigma}^T
$$
and $\beta$ is as follows.
Let $\beta_1 = f(\xbar_1)$ and
$$
\beta_i = f(\xbar_i) - \sum_{\kappa = s_i}^{S_i}\beta_{h(\kappa)} - f(0)\ind{\iota_\ell =1 \,\, \forall \ell}
$$
for $i\leq2\leq(mq)^d$, where $s_i = (\max\set{1,\iota_1-(m-1)},\cdots,\max\set{1,\iota_d-(m-1)})$ and $S_i = (\max\set{1,\iota_1-1},\cdots,\max{1,\iota_d-1})$ are multi-indices. Note that the $\beta_i$'s sequentially capture the residual of the function $f_m$ as we travel along the $T_i$ regions.

To find $\norm{\beta}_1$ note that since $\twonorm{\xbar_1 - 0}\leq L_f\sqrt{d}/(mq)$, we have $\abs{\beta_1} \leq f(0) + L_f\sqrt{d}/(mq)$. Also note $\beta_i = f(\xbar_i) - f(\xbar_{i-1})$ for $2\leq \iota_1\leq m$, hence for $2\leq i\leq m$
\begin{align*}
\abs{\beta_i} &= \abs{f(\xbar_i) - f(\xbar_{i-1})} \\
&\leq L_f\twonorm{\xbar_i - \xbar_{i-1}} \\
&\leq L_f\frac{\sqrt{d}}{mq}
\end{align*}
\end{sloppypar}

For larger $\iota_1$ we have $\beta_i = f(\xbar_i) - f(\xbar_{i-1}) +\beta_{i-m}-f(0)\ind{\iota_\ell =1 \,\, \forall \ell}$.
 Note that when $\iota_1 = m+1$ we have lost influence of $\beta_1$, so
$$
\abs{\beta_{m+1}}\leq \abs{f(\xbar_i) - f(\xbar_{i-1})} + \abs{\beta_1} + f(0) \leq 2L_f\frac{\sqrt{d}}{mq}+2f(0)
$$
and, similarly, $\abs{\beta_i}\leq 2L_f\frac{\sqrt{d}}{mq}$ for $m+2\leq i \leq2m$. The process continues such that, in general
\begin{align*}
\abs{\beta_i}\leq \ceil*{\frac{\iota_1}{m}}\paren{ L_f\frac{\sqrt{d}}{mq} + f(0)\ind{\iota_\ell =1 \,\, \forall \ell}}
\end{align*}
for $i\leq mq$. Adding these we obtain $\frac{q+1}{2}(L_f\sqrt{d} + qf_0)$. Denote this quantity by $e'$. Then this process is repeated over every dimension, having $q$ chunks of multiples of $e'$, the first multiple is $1m$, the second $2m$, and so on. The final sum is then
$$
\norm{\beta}_1\leq \paren{mq}^{d-1}\paren{\frac{q+1}{2}}^2\paren{qf(0) + L_f\sqrt{d}}.
$$
Therefore
$$
(2\sigma)^d\norm{\beta}_1 = \frac{\paren{m}^{d-1}}{q}\paren{\frac{q+1}{2}}^2\paren{qf(0) + L_f\sqrt{d}}.
$$
\end{proof}


\begin{proof}[Proof of Theorem \ref{theorem:approx_rates}]
This proof is similar to the proof of Theorem~\ref{theorem:gaussrates}. Combining Lemmas~\ref{lemma:estim_rates}, \ref{lemma:simplefunction}, setting $R_{n'}\sim m^{(d-1)}$, ignoring the $log$ terms for now and considering only the dependence on $n$ and $m$ we obtain
\begin{align*}
\twonorm{f-f_{\alphahat}}^2 &\lesssim \frac{m^{d-1}}{{n}^{1/2}} + \frac{1}{m^2}.
\end{align*}
Setting these terms equal we obtain $m = n^{1/(2d+2)}$, with an overall rate of $n^{-1/(d+1)}$. Adding the log term we obtain
$$
\twonorm{f-f_{\alphahat}}^2 \lesssim \frac{\log^{1/2}(n^{d/(2d+2)}/\delta)}{n^{1/(d+1)}} \lesssim \frac{\log^{1/2}(n/\delta)}{n^{1/(d+1)}}.
$$
\end{proof}

\bibliographystyle{IEEEtran}
\bibliography{efrenbib}

\end{document}